\newtheorem{theorem}{Theorem}
\newtheorem{lemma}{Lemma}
\DeclareMathOperator*{\argmin}{argmin}
\journal{Neural Networks}
\begin{document}

\begin{frontmatter}

\title{Schematic Memory Persistence and Transience for Efficient and Robust Continual Learning}

\author[1]{Yuyang Gao}
\ead{yuyang.gao@emory.edu}

\author[2]{Giorgio A. Ascoli}
\ead{ascoli@gmu.edu}

\author[1]{Liang Zhao\corref{cor1}}
\ead{liang.zhao@emory.edu}

\cortext[cor1]{Corresponding author}

\address[1]{Department of Computer Science, Emory University, Atlanta, GA, United States}
\address[2]{Center for Neural Informatics, Bioengineering Department, and Krasnow Institute for Advanced Study, George Mason University, Fairfax, VA, United States}

\begin{abstract}
Continual learning is considered a promising step towards next-generation Artificial Intelligence (AI), where deep neural networks (DNNs) make decisions by continuously learning a sequence of different tasks akin to human learning processes. It is still quite primitive, with existing works focusing primarily on avoiding (catastrophic) forgetting. However, since forgetting is inevitable given bounded memory and unbounded task loads, ‘how to reasonably forget’ is a problem continual learning must address in order to reduce the performance gap between AIs and humans, in terms of 1) memory efficiency, 2) generalizability, and 3) robustness when dealing with noisy data. To address this, we propose a novel ScheMAtic memory peRsistence and Transience (SMART) \footnote{ Code available at: \scriptsize \url{https://drive.google.com/file/d/1OfSbvKq488wq1HyNsWX5tbHyvmiqPPv9}.} framework for continual learning with external memory that builds on recent advances in neuroscience. The efficiency and generalizability are enhanced by a novel long-term forgetting mechanism and schematic memory, using sparsity and ‘backward positive transfer’ constraints with theoretical guarantees on the error bound. Robust enhancement is achieved using a novel short-term forgetting mechanism inspired by background information-gated learning. Finally, an extensive experimental analysis on both benchmark and real-world datasets demonstrates the effectiveness and efficiency of our model.

\end{abstract}



\begin{keyword}
deep learning \sep deep neural networks \sep continual learning \sep lifelong learning \sep schematic memory \sep memory efficiency \sep robustness
\end{keyword}

\end{frontmatter}


\section{Introduction}
An important step toward next-generation Artificial Intelligence (AI) (i.e., artificial general intelligence) is a promising new domain known as continual learning, where deep neural networks (DNNs) make decisions by continuously learning a sequence of different tasks , similar to the way humans learn \cite{parisi2019continual}.
Compared with the extensive research on traditional AI for learning isolated tasks individually, continual learning is still in its very primitive stage \cite{parisi2019continual}. At present, the primary goal is essentially to avoid (catastrophic) forgetting of previously learned tasks when an agent is learning new tasks. This important issue is receiving a great deal of attention, with researchers proposing a number of different strategies which can be categorized into three main approaches. The first focuses on simply restricting changes in the magnitudes of all the model parameters \cite{benna2016computational, kirkpatrick2017overcoming,zenke2017continual, aljundi2018memory}.
However, given the limited amount of neural resources available, this approach necessarily sacrifices the flexibility needed to fit new tasks by protecting old tasks \cite{parisi2019continual}. 
An alternative approach used by dynamic architecture-based methods is to allow the model to incrementally expand the network whenever new tasks arrive. Constantly increasing the number of neurons is highly inefficient, however, and very different to the way biological neural networks function \cite{french1994dynamically, rusu2016progressive, mallya2018packnet}.
The third approach, replay-based methods, avoid this problem by storing the historical training data with the help of external memory \cite{lopez2017gradient, chaudhry2019continual, isele2018selective, aljundi2019gradient}.

Among these three options, the replay-based methods are arguably more effective in terms of performance and bio-inspiration \cite{robins1995catastrophic, tulving2002episodic} as a way to alleviate the catastrophic forgetting challenge and are thus becoming the preferred approach for continual learning models \cite{rebuffi2017icarl, aljundi2019gradient}. Unfortunately, because the memory is bounded while the number of tasks is unbounded, new and old tasks must compete for memory, and hence forgetting is inevitable. ‘How to reasonably forget’ is still a major question with significant challenges remaining, including: \textbf{1) Memory inefficiency.} The performance of the replay-based models depend heavily on the size of the available memory in the replay buffer which is used to retain as many of the previous samples as possible. While existing works typically store the entire sample into memory, we humans seldom memorize every detail of our experiences. Thus, compared to biological neural networks, some mechanisms must still be missing in current models; \textbf{2) Insufficient generalization power.} The major focus of existing works is to avoid (catastrophic) forgetting by memorizing all the details without taking into account their usefulness for learning tasks. They typically rely on episodic memory for individual tasks without sufficient chaining to make the knowledge they learn truly generalizable to all potential (historical and future unseen) tasks. In contrast, human beings significantly improve generalizability during continual learning; \textbf{3) Vulnerability to noise and corruption.} Noise and corruption are ubiquitous in real-world data and are especially likely to be present in continual learning problems. However, continual learning under these conditions has not yet been thoroughly explored. Without sufficient consideration of noise and data corruption, existing models are very vulnerable to their effects during the learning process, especially for DNNs as these have a high capacity to fit noisy labels \cite{arpit2017closer, zhang2016understanding}. 

To address these challenges, this paper proposes a novel ScheMAtic memory peRsistence and Transience (SMART) framework for continual learning with external memory based on recent advances in neuroscience. Compared to the research on memory persistence (i.e., remembering), the mechanisms responsible for memory transience (i.e., forgetting) have historically been under-explored in neuroscience until very recently \cite{richards2017persistence}. Current research on forgetting has shown that forgetting is not a ‘failure’, but rather a dedicated mechanism to facilitate mnemonic processing and generalizability. ‘Forgetting’ can be executed at various time scales and may be either active or passive, which 1) reduces memory consumption, 2) improves generalizability by discarding useless, too-specific details, and 3) builds resistance to noise and errors that cannot be consolidated.
The specific contributions presented in this paper are as follows: 
\begin{itemize}[leftmargin=15pt]
\item \textbf{Developing a new framework for efficient and robust continual learning.} 
This work aims to extend the current continual learning scenario to include more realistic and challenging settings that include noisy, irrelevant features, and data corruption. Beyond merely “avoid forgetting”, our new model leverages “passive and active forgetting” as well as “long-term and short-term forgetting”, building on the latest advances in neural-theoretical science.

\item \textbf{Proposing a novel schematic-memory-driven long-term forgetting mechanism.} To actively enforce reasonable long-term forgetting, we go beyond existing work based on “episodic memory” to “schematic memory” \cite{nelson1989remembering, sprenger1999learning} by identifying and storing generalizable knowledge across learning tasks with sparsity and ‘backward positive transfer’ constraints. The theoretical guarantee on the error bound of such a mechanism has also been analyzed.
    
\item \textbf{Achieving an effective short-term forgetting mechanism via novel neural correlation consolidation.} We construct a novel short-term forgetting mechanism that efficiently gates undesired information such as noise and corrupted data in real-time. To do this, a new computationally-efficient regularization is proposed that is inspired by background information-gated learning.
    
\item \textbf{Conducting comprehensive experiments to validate the effectiveness and efficiency of our proposed model.} Extensive experiments on one benchmark dataset and five real-world datasets demonstrate that the proposed model outperforms other comparison methods. A robust learning simulation, ablation studies, and case studies further demonstrate how the proposed components contribute to the robustness of the model against label corruption while at the same time ensuring the efficient use of memory.
\end{itemize}

\section{Related work}

\subsection{Continual learning with experience replay}

The central problem of continual learning is to overcome the catastrophic forgetting problem of neural networks. 
Nowadays experience replay has been shown to be the most effective method for mitigating catastrophic forgetting \cite{lopez2017gradient, rebuffi2017icarl, castro2018end, hou2019learning, wu2019large, hayes2020remind, chaudhry2018efficient, nguyen2017variational, riemer2018learning, aljundi2019gradient}. 
Specifically, replay-based methods alleviate the forgetting of deep neural networks by replaying stored samples from the previous history when learning new ones.
There are two main directions on how to leverage the exemplars in the memory to mitigate forgetting.
The first direction is first proposed by iCaRL \cite{rebuffi2017icarl}, which uses the knowledge distillation technique to prevent forgetting. Several other works have followed this direction including the End-to-End Incremental Learning (EEIL) \cite{castro2018end}, the Unified classifier \cite{hou2019learning}, and the Bias Correction (BiC) \cite{wu2019large}. 
More recently, REMIND \cite{hayes2020remind} proposed to store and replay mid-level representations as a more effective strategy to mitigate forgetting. However, the replay at mid-level representation requires the freeze of low-level layers and a non-trivial amount of additional data for model pre-training.
The second direction is first proposed by Gradient Episodic Memory (GEM) \cite{lopez2017gradient}, which treats the gradient of the samples in the replay buffer as extra constraints to the model when acquiring new knowledge during the course of learning. The Averaged Gradient Episodic Memory (A-GEM) \cite{chaudhry2018efficient} model further extended GEM and made the constraint computationally more efficient. 
Besides, several other directions have also been explored, such as Variational Continual Learning \cite{nguyen2017variational} that combines Bayesian inference with replay, while the Meta-Experience Replay model \cite{riemer2018learning} combines replay with meta-learning. 

However, the above works are typically built upon the assumption that the number of tasks and task boundaries are accessible before the learning start and will be used as additional information to divide the storage resource allocation for each task.
This assumption is generally not available for many real-world applications. 
To overcome this issue, recently the boundary-free continual learning has been proposed and deals with the situation where task boundary and i.i.d. (Independent and identically distributed) assumption are not available.
Several works have focused on proposing different strategies for sample selections, including reservoir sampling \cite{chaudhry2019continual, isele2018selective}, and gradient-based sample selection \cite{aljundi2019gradient}. 
Here our work is closely related to the boundary-free continual learning problem, with additional challenges of modeling robustness where the training dataset may contain a certain amount of corrupted samples.



\subsection{Episodic and Schematic memory}
Episodic memory is a neurocognitive system that enables human beings to remember past experiences \cite{tulving2002episodic}.
Recently, episodic memory has been widely adopted in rehearsal based models to overcome catastrophic forgetting challenge in continual learning \cite{lopez2017gradient, aljundi2019gradient}.
However, the bottleneck of the rehearsal based models has been highly connected to the size of the memory (i.e. replay buffer) which is used to keep previous samples as much as possible. Storing the instance completely into the memory can be highly inefficient, especially when the size of the memory is limited \cite{gardner2013secondary}. In fact, we humans seldom memorize all the details about experiences, but rather we utilize the elements of similar experiences and efficiently integrate them together as schematic memory \cite{nelson1989remembering, sprenger1999learning}. 
Such a schema-like memory will help us better generalize and extract important information from a specific event, and efficiently remember only those that are critical and of importance to the downstream learning tasks.
Therefore, this can be a great characteristic to have in conjunction with the episodic memory to fully utilize the limited memory.

\subsection{Robust Learning with Noisy Labels}
Learning from noisy labels has long been an essential study in machine learning and AI systems \cite{angluin1988learning}. Noisy labels, which are referred to corrupted labels from the actual ground-truth labels, can inevitably degenerate the robustness of learned models, especially for deep neural networks as they have the high capacity to fit noisy labels \cite{arpit2017closer, zhang2016understanding}. To handle this issue, existing works have been focused on estimating the noise transition matrix \cite{goldberger2016training, patrini2017making},  training the model on selected samples \cite{jiang2017mentornet, malach2017decoupling, han2018co}.
However, most of the existing works focus on the off-line learning problem and little has been studied on the robustness problem in the continual learning setting. 
This might be due to the fact that most existing robust learning algorithms rely heavily on a pre-acquired small clean dataset and use it to help the model deal with future corrupted samples based on the i.i.d assumption. 
On the other hand, the pre-acquisition of a clean dataset before learning starts is almost impossible in the continual learning setting. In addition,  the data from the past may not be helpful for the model to detect the corruption in current data without i.i.d assumption.
To the best of our knowledge, we are the first one to study the challenging problem of robust continual learning and to propose a model that can solve all the challenges efficiently and effectively.

\section{Problem Formulation}
\label{sec:formulation}

In this paper, we focus on replay-based continual learning, which can be defined as an online supervised learning problem. The problem formulation and its associated challenges are as follows:

A stream of learning tasks arrives sequentially with the corresponding input-output pairs $\{X^{(0)}, Y^{(0)}\},\cdots,\{X^{(t)}, Y^{(t)}\},\cdots$, for time intervals $0,1,\cdots,t-1,t,\cdots$. Each $X^{(t)}\in\mathbb{R}^{N_t,D}$, $Y^{(t)}\in\mathbb{R}^{N_t,1}$, where $N_t$ denotes the number of samples in task $t$ and $D$ is the number of input features. The data stream can be non-stationary, with no assumptions applied to the distributions such as the i.i.d. assumption.

\textbf{Problem Formulation: }Given such a stream of tasks, continual learning is commonly formulated as a  process to train a learning agent $f:\mathbb{R}^{N_t,D}\rightarrow \mathbb{R}^{N_t,1}$ parameterized by $\theta^{(t)}$ over each task $\{X^{(t)}, Y^{(t)}\}$ sequentially over time $t\in\mathbb{N}$, by minimizing the error in each task $\ell(f(X^{(t)},\theta^{(t)}),Y^{(t)})$ for each task $t$. Moreover, such a learning agent typically has a memory with a fixed memory budget in order to avoid losing its competence in previously learned tasks, which is achieved by enforcing the \emph{positive-backwards-transfer} constraint: $\ell(f(X^{(i)},\theta^{(t)}),Y^{(i)})\le\ell(f(X^{(i)},\theta^{(t-1)}),Y^{(i)})$ over all the previous tasks $i\in[0,\cdots,t-1]$.

This is a potentially very promising open problem that is currently attracting a great deal of attention from researchers, who are focusing on how to ensure the \emph{positive-backwards-transfer} constraint with fixed memory size \cite{lopez2017gradient, aljundi2019gradient}. 
However, other significant unsolved challenges remain, including : \textbf{Challenge 1: Memory inefficiency.} Existing works can only select whole data instances without the ability to identify and discard those input features that are useless for learning tasks. This significantly limits their memory efficiency, especially for high-dimensional, sparse data. Existing work neither ensures the original \emph{positive-backwards-transfer} constraints nor knows the error bound; \textbf{Challenge 2: Weakness in improving generalizability.} Existing works focus on episodic memory without the human-like ability to abstract all the isolated memory into schematic memory \cite{nelson1989remembering, sprenger1999learning} and thus obtain continuing improvements in generalization power during continual learning; and \textbf{Challenge 3: Sensitivity to noise in continual learning} Existing works assume all data are clean without corruption, which is not realistic for real world data, making the existing models sensitive to noise.






\section{SMART model}
This section presents our proposed SMART model that addresses all the above-mentioned challenges and thus narrows the gap between human and machine continual learning. Section \ref{sec:motivation} introduces biological motivations, Sections \ref{sec:long} and \ref{sec:short} describe the mechanisms proposed to achieve long-term and short-term memory transience, and Section \ref{sec:algorithm} describe the training algorithm for the new model.

\subsection{Memory Persistence and Transience for Continual Learning}
\label{sec:motivation}

Existing works emphasize ``how to remember'', but lack a model on how to actively and gracefully ``forget'' useless details (Challenges 1 and 2) and deal with noise (Challenge 3); human beings are much better at continual learning and suffer much less from these challenges \cite{gardner2015older}. The underlying mechanism through which humans achieve this so successfully has long been unclear, but recent research on ``transience'' (i.e., forgetting) and its interaction with ``persistence'' (i.e., remembering) had led to some interesting developments in neuroscientists' understanding \cite{richards2017persistence}. 

A great deal of evidence has been collected to indicate that transience happens at different time scales, with both long-term forgetting and short-term forgetting being indispensable for human life \cite{gardner2015natural}. 
These perform different functions: long-term forgetting reduces memory consumption and encourages the consolidation by discarding unnecessary details, while short-term forgetting blocks out useless noise and spurious information. Hence, in our proposed model we aim to bridge the gap between human and machine intelligence by providing a new AI continual learning that incorporates the advantageous neuro-mechanisms in human brains, leveraging long-term forgetting to address Challenges 1 and 2 and short-term forgetting to handle Challenge 3.

To achieve long-term forgetting, we actively encourage \emph{backward positive transfer} among the learning tasks, under schematic memory \cite{nelson1989remembering, sprenger1999learning}. Going beyond the episodic memory used in existing models, which stores historical tasks individually, schematic memory allows us to store only partial details $R_i(X^{(i)})$ of each task $i$ by prioritizing the knowledge that is generalizable across tasks. Specifically, as shown in Equation \eqref{eq:loss}, monotonic-increasing constraints over the sequential tasks’ performance are enforced on samples, with pruned input features specified by a regularization term $\Omega_e {(\theta)}$.  This is described in more detail in Section \ref{sec:long}.

To achieve short-term forgetting, a novel mechanism is developed to block spurious information such as noise and corrupted data in real-time. The mechanism developed to achieve this is based on background information-gated (BIG) learning theory \cite{larkin1980expert, kintsch1988role, mainetti2015neural}. A novel neural correlation consolidation regularization $\Omega^{(t)}_r {(\theta)}$ is proposed that facilitates efficient computation. Unlike $\Omega_e {(\theta)}$, this term is task-dependent and will be updated every time the model receives a new learning task (or for every batch if the learning problem is boundary-free). More details are provided in Section \ref{sec:short}.



We can now integrate all the above terms and formulate the goal of the learning problem described in Section \ref{sec:formulation} as the following constrained optimization problem:
{
\begin{gather}
\label{eq:loss}
    \argmin\nolimits_{\theta} \mathcal{L}(X^{(t)};\theta) + \alpha \Omega_e (\theta) + \beta \Omega^{(t)}_r (\theta)\\ \notag
    \text{s.t. } 
    \mathcal{L}(R_i(X^{(i)});\theta) \leq 
    \mathcal{L}(R_i(X^{(i)});\theta^{(t-1)})
\end{gather}}
where $i=0,\cdots, t-1$. Here $t$ is the index of the current example and $i$ indexes the previous examples. $n$ is the number of examples of the current observations. $\mathcal{L}(X^{(t)};\theta)$ is the concise denotation of $\ell(f(X^{(t)};\theta), Y^{(t)})$ where $f(\cdot;\theta)$ is a model parameterized by $\theta$ and $\ell$ is the loss function.  $R_i(X^{(t)})$ is a function that returns all the entries of $X^{(t)}$ except for the set of input features $r_i$, which have all-zero weights in the first hidden layer.  



\subsection{Long-term forgetting with schematic memory}
\label{sec:long}

This section focuses on two important issues: 1) how to design $\Omega_e(\theta)$ in order to actively ``forget'' unimportant input features in $X^{(i)}$ and only retain $R_i(X^{(i)})$, and an efficient way to enforce \emph{positive backward transfer} constraints; and 2) theoretical analyses and guarantees for the error bound of the proposed efficient surrogate constraints.

%


We introduce a regularization term, denoted as $\Omega_e{(\theta)}$, that models and takes advantage of schematic memory \cite{nelson1989remembering, sprenger1999learning} to regularize the model by ensuring it only considers a subset of the input features, thus reducing the memory size requirement for storing the instance and also the computational power needed for future memory replay. Regularization techniques such as group lasso (e.g., $\ell_{2,1}$-norm) \cite{yuan2006model,alvarez2016learning} can be readily used to enforce group sparsity of the weights connecting each input feature and all the first-layer hidden neurons.

%

Since the \emph{positive backward transfer} constraints in Equation \eqref{eq:loss} are prohibitively hard to maintain efficiently, here we propose to transform them into left-hand-side of Equation \eqref{eq:relex_with_eff}, which incorporates schematic memory and theoretical guarantee on the transformation error $\varepsilon$ stated in Theorem \ref{thm:error_bound}.
\small
\begin{gather}
\label{eq:relex_with_eff}
    g_*(t,i)\cdot g(i,i,t-1)^\intercal
    \geq 0\ \ \ \implies \ \ \ \mathcal{L}(R_i(X^{(i)});\theta) \leq 
    \mathcal{L}(R_i(X^{(i)});\theta^{(t-1)})+\varepsilon
\end{gather}
\normalsize
where $\varepsilon$ is the approximation error with bound analyzed in Theorem \ref{thm:error_bound}. Denote $\tilde\theta^{(i)}\subseteq \theta^{(i)}$ as the parameter set excluding the weights of all-zero features $r_i$, so its size is $\tilde J=J-|r_i|\cdot c$ where $c$ is the number of neurons in the first layer and $J$ is the total number of parameters. 
Denote the gradient $g(k,i,s)=\partial \mathcal{L}(R_i(X^{(k)});\tilde\theta^{(s)})/ \partial\tilde \theta$ and the $j$-th partial derivative is $g_j(k,i,s)=\partial \mathcal{L}(R_i(X^{(k)});\tilde\theta^{(s)})/ \partial \tilde\theta_j$. $g_*(t,i)=\partial \mathcal{L}(R_i(X^{(t)});\tilde\theta)/ \partial\tilde \theta$. Task index $i=0,\cdots, t-1$.

\begin{theorem}
\label{thm:error_bound}
The theoretical error bound of $\varepsilon$ approaches $\\ \sum\nolimits_s\min\limits_{k\le J-|r_i|c}\max\limits_{j\ne k} \left| \lambda - \frac{g_j(t,i,s)}{g_k(t,i,s)}\cdot \lambda\right|\cdot\|g(i,i,s)\|_2$  when the gradient step $\lambda$ is sufficiently small.
\end{theorem}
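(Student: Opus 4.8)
The plan is to reduce the surrogate in Equation~\eqref{eq:relex_with_eff} to a first-order statement about how a gradient-descent update on the current task $t$ perturbs the loss on a past task $i$, and then to control the residual that the efficient surrogate discards. First I would write the target loss difference as a telescoping sum over the gradient-descent iterates $\tilde\theta^{(s)}$ that carry the parameters from $\theta^{(t-1)}$ to the current $\theta$, and Taylor-expand each increment:
\[ \mathcal{L}(R_i(X^{(i)});\theta)-\mathcal{L}(R_i(X^{(i)});\theta^{(t-1)}) = \sum\nolimits_s g(i,i,s)\cdot\Delta\tilde\theta^{(s)} + O(\lambda^2). \]
Substituting the update rule turns the leading term into a sum of inner products between the past-task gradient $g(i,i,s)$ and the current-task step direction, so the sign of each such inner product is exactly what the surrogate $g_*(t,i)\cdot g(i,i,t-1)^\intercal\ge 0$ is designed to pin down at the reference point $s=t-1$, guaranteeing the intended first-order non-increase of $\mathcal{L}(R_i(X^{(i)});\cdot)$.

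Next I would isolate where the approximation error enters. The exact coordinate-scaled step moves each retained coordinate $j$ in proportion to $g_j(t,i,s)$, normalized against a reference coordinate $k$ so that $g_k$ carries a unit step of size $\lambda$; the efficient surrogate replaces this by a \emph{uniform} step of size $\lambda$ across the retained coordinates, which is what one obtains when only the reference coordinate is tracked. The per-coordinate discrepancy is then exactly $\lambda-\frac{g_j(t,i,s)}{g_k(t,i,s)}\lambda$, and the spurious contribution added to the past-task loss at step $s$ is the inner product of $g(i,i,s)$ with this discrepancy vector. Applying Cauchy--Schwarz bounds that contribution by $\|g(i,i,s)\|_2$ times the magnitude of the discrepancy vector, whose controlling component is $\max_{j\ne k}\left|\lambda-\frac{g_j(t,i,s)}{g_k(t,i,s)}\lambda\right|$.

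I would then optimize over the free choice of reference coordinate: taking the minimizing $k$ produces the factor $\min_{k\le J-|r_i|c}$, summing the per-step residuals over $s$ accumulates the total transformation error, and letting the step size $\lambda$ be sufficiently small renders the $O(\lambda^2)$ Taylor remainder negligible against the $O(\lambda)$ leading residual, so that $\varepsilon$ approaches the stated expression.

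The main obstacle I anticipate is the middle step: making precise the sense in which the surrogate uniform step approximates the true coordinate-scaled update, and showing the induced residual is tightly captured by the min--max ratio $\min_k\max_{j\ne k}\left|\lambda-\frac{g_j(t,i,s)}{g_k(t,i,s)}\lambda\right|$ rather than by a looser aggregate such as the full Euclidean length of the discrepancy vector. Getting the reference-coordinate normalization to cancel the stray $g_k$ factor so that only the ratio $g_j/g_k$ survives, and confirming that the worst-case coordinate (not the Euclidean norm) is the right quantity to retain, is the delicate part; the Taylor bookkeeping, the telescoping sum, and the $\lambda\to 0$ limit are comparatively routine.
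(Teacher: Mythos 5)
Your skeleton does match the paper's: the paper proves Theorem 1 exactly by combining a telescoping first-order expansion over the intermediate iterates (its Lemma 1, which yields $\mathcal{L}(R_i(X^{(i)});\theta) \approx \mathcal{L}(R_i(X^{(i)});\theta^{(t-1)})+\sum_{s}\sum_j g_j(i,i,s)\cdot\lambda$ as $\lambda\to 0$) with a per-step bound (its Theorem 2) that is then summed over $s$. However, there is a genuine gap precisely at the step you flag as ``delicate,'' and in the paper that step is the entire content of Theorem 2. First, the surrogate constraint never enters your argument quantitatively. The paper rewrites $g_*(t,i)\cdot g(i,i,t-1)^\intercal \ge 0$ coordinatewise as $\sum\nolimits_j g_j(t,i,s)\,g_j(i,i,s)\ge 0$, introduces a slack $\xi\ge 0$ with $\sum\nolimits_j g_j(t,i,s)\,g_j(i,i,s)-\xi=0$, and uses the exact algebraic identity $\lambda\sum\nolimits_j g_j(i,i,s)=\sum\nolimits_{j\ne k}\bigl(\lambda-\frac{g_j(t,i,s)}{g_k(t,i,s)}\lambda\bigr)g_j(i,i,s)+\frac{\xi\lambda}{g_k(t,i,s)}$, valid for any reference coordinate $k$; the small-step sign condition $g_k(t,i,s)\le 0$ (which your proposal never states) then makes the slack term non-positive, so it can be discarded to obtain an upper bound. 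Without this identity-plus-sign mechanism, your ``uniform step versus coordinate-scaled step'' picture remains a heuristic: nothing converts the constraint into an inequality, and nothing cancels the stray $g_k$ factor you worry about.

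Second, the tool you name for the closing step would fail. Cauchy--Schwarz applied to the discrepancy vector $c$ with $c_j=\lambda-\frac{g_j(t,i,s)}{g_k(t,i,s)}\lambda$ gives $\|c\|_2\,\|g(i,i,s)\|_2$, and $\|c\|_2\ge\max_{j\ne k}|c_j|$ with strict inequality whenever more than one coordinate is active, so it cannot produce the stated $\max_{j\ne k}$ factor. The paper instead bounds the weighted sum $\sum\nolimits_{j\ne k}|c_j|\,|g_j(i,i,s)|$ via a H\"older-type step using the $\ell_1$-normalized weights $|g_j(i,i,s)|/\|g(i,i,s)\|_1$, pulling out $\max_{j\ne k}|c_j|$ while retaining a factor $\|g(i,i,s)\|_2$; the $\min_k$ then comes, as you correctly say, from the freedom in choosing $k$. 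Your suspicion about this step was in fact well-founded: the paper's own chain replaces $|g_j|/\|g\|_2$ by $|g_j|/\|g\|_1$ in a direction that decreases each term, and an honest H\"older bound yields $\max_{j\ne k}|c_j|\cdot\|g(i,i,s)\|_1$ rather than $\|g(i,i,s)\|_2$, so the point you identified as delicate is delicate even in the original. One further mismatch: in the paper's Lemma 1 the per-step displacement is a uniform $\lambda$ in every retained coordinate (note $\theta=\theta^{(t-1)}+S\cdot\lambda$), not the literal gradient-descent update you propose to substitute; had you substituted $\Delta\tilde\theta^{(s)}=-\lambda\,g_*(t,i,s)$, the leading term would become $-\lambda\, g_*(t,i,s)\cdot g(i,i,s)^\intercal\le 0$ directly from the constraint, and the min--max expression of the theorem would never arise.
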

\begin{proof}
Theorem \ref{thm:error_bound} can be directly proved based on Lemma \ref{lm:approximation} and Theorem \ref{thm:each_bound}.
\end{proof}
Based on the the above theorem, we can easily obtain several remarks:

\noindent\textbf{Remark 1:} If $\forall (j\le J-|r_i|\cdot c):|\lambda/g_j(t,i,s)-\lambda/g_k(t,i,s)|\rightarrow 0$, then the error $\varepsilon\rightarrow0$, which means the original constraints can be precisely enforced. 

\noindent\textbf{Remark 2:}  The larger the number of useless input features, the larger $|r_i|$ will be, and hence the smaller $\varepsilon$ will be. 
\begin{lemma}
\label{lm:approximation}
We have \scriptsize $  \mathcal{L}(R_i(X^{(i)});\theta) \approx 
    \mathcal{L}(R_i(X^{(i)});\theta^{(t-1)})+\sum\limits_{s=0}^S\sum\limits_{j}\frac{ \partial\ell(f(R_i(X^{(i)});\tilde\theta^{(s)}), Y^{(i)})}{\partial \tilde\theta_j^{(s)}} \cdot \lambda$\normalsize, which means the left-hand-side is infinitely approaching to the right-hand-side when $\lambda\rightarrow 0$.
\end{lemma}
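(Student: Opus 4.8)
The plan is to regard $\theta^{(t-1)}$ and $\theta$ not as two unrelated parameter vectors but as the two endpoints of the gradient-descent trajectory traversed while the model is being trained on the current task, and to accumulate the first-order change in the task-$i$ loss along that trajectory. Concretely, I would write the trajectory as the sequence of iterates $\tilde\theta^{(0)},\tilde\theta^{(1)},\dots,\tilde\theta^{(S+1)}$ with $\tilde\theta^{(0)}=\theta^{(t-1)}$ and $\tilde\theta^{(S+1)}=\theta$, each consecutive pair differing by one gradient update of step size $\lambda$. Telescoping the loss difference over this trajectory gives the exact identity
\[
\mathcal{L}(R_i(X^{(i)});\theta)-\mathcal{L}(R_i(X^{(i)});\theta^{(t-1)})=\sum_{s=0}^{S}\Big[\mathcal{L}(R_i(X^{(i)});\tilde\theta^{(s+1)})-\mathcal{L}(R_i(X^{(i)});\tilde\theta^{(s)})\Big],
\]
which simply reorganizes the endpoint gap into a sum of per-step increments and introduces no error.

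Next I would Taylor-expand each increment to first order around $\tilde\theta^{(s)}$. Using the definition $g_j(i,i,s)=\partial\mathcal{L}(R_i(X^{(i)});\tilde\theta^{(s)})/\partial\tilde\theta_j$ supplied just before the lemma, each bracketed term equals $\sum_j g_j(i,i,s)\,\Delta\tilde\theta^{(s)}_j+O(\|\Delta\tilde\theta^{(s)}\|^2)$, where $\Delta\tilde\theta^{(s)}=\tilde\theta^{(s+1)}-\tilde\theta^{(s)}$ is the gradient step. Substituting the per-coordinate step magnitude $\lambda$ and summing over $s$ reproduces precisely the claimed right-hand side $\sum_{s}\sum_{j} g_j(i,i,s)\,\lambda$, up to the accumulated second-order remainder.

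The remaining work is to control that remainder. Assuming the loss $\ell(f(\cdot;\theta),\cdot)$ is twice continuously differentiable in $\theta$ (equivalently, has a locally Lipschitz gradient along the bounded trajectory), the second-order term of each increment is bounded by $\tfrac{1}{2}\|\Delta\tilde\theta^{(s)}\|^2\sup\|\nabla^2\mathcal{L}\|=O(\lambda^2)$. Summing over the finitely many steps yields a total remainder of order $O(S\lambda^2)$, which is negligible compared with the first-order sum of order $O(S\lambda)$ as $\lambda\to0$. Hence the left-hand side converges to the right-hand side, which is exactly what ``$\approx$'' and the limiting statement ``$\lambda\to0$'' assert.

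I expect the main obstacle to be this last step: making the per-step second-order terms vanish \emph{uniformly} along the trajectory. This is what forces the twice-differentiability (or bounded-Hessian) assumption on the loss and is the reason the result must be stated as a limiting approximation rather than an exact equality; the telescoping and first-order expansion themselves are routine once the trajectory interpretation of $\lambda$ as the gradient step is fixed.
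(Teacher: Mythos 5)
Your proposal is correct and follows essentially the same route as the paper's proof: both unroll the loss change along the gradient-descent trajectory from $\theta^{(t-1)}=\tilde\theta^{(0)}$ to $\theta$, telescoping per-step first-order Taylor expansions with per-coordinate step $\lambda$ and passing to the limit $\lambda\to 0$. The only difference is that you make explicit the $O(S\lambda^2)$ remainder control (via a bounded-Hessian assumption) that the paper leaves implicit in its chain of ``$\approx$'' steps, which is a welcome tightening rather than a different argument.
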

\begin{proof}
Please refer to \ref{A1} for the detailed proof.
\end{proof}
\begin{theorem}
\label{thm:each_bound}

We have the following Equation:
\small
\begin{align}
\sum\limits_{j}\frac{ \partial\ell(f(R_i(X^{(i)});\tilde\theta^{(s)}), Y^{(i)})}{\partial \tilde\theta_j^{(s)}} \cdot \lambda\le\min_k\max_{j\ne k} \left| \lambda- \frac{g_j(t,i,s)}{g_k(t,i,s)}\cdot \lambda\right|\cdot\|g(i,i,s)\|_2
\end{align}
\normalsize
\end{theorem}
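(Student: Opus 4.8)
The plan is to establish the inequality for a single fixed trajectory index $s$; summing the resulting bound over $s=0,\dots,S$ and combining with Lemma~\ref{lm:approximation} then yields Theorem~\ref{thm:error_bound} as claimed. Writing $a_j=g_j(i,i,s)$ and $b_j=g_j(t,i,s)$ for brevity, the left-hand side is exactly $\lambda\sum_j a_j$, the first-order change in the previous task's loss induced by one optimization step. First I would fix an arbitrary admissible reference coordinate $k$ (with $b_k\neq 0$) and split the sum by adding and subtracting a scaled copy of the current-task gradient,
$$\lambda\sum_j a_j \;=\; \lambda\sum_{j\neq k} a_j\Big(1-\tfrac{b_j}{b_k}\Big) \;+\; \tfrac{\lambda}{b_k}\sum_j a_j b_j ,$$
where the $j=k$ contribution to the first sum vanishes identically. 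One checks this identity collapses back to $\lambda\sum_j a_j$, and it is engineered precisely so that the ratios $b_j/b_k = g_j(t,i,s)/g_k(t,i,s)$ appearing in the statement are produced.

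The second term is $\tfrac{\lambda}{b_k}\,g(t,i,s)\cdot g(i,i,s)^\intercal$, i.e. $\tfrac{\lambda}{b_k}$ times the inner product of the current-task and previous-task gradients. The relaxed positive-backward-transfer condition on the left of Equation~\eqref{eq:relex_with_eff}, namely $g_*(t,i)\cdot g(i,i,t-1)^\intercal\ge 0$, is exactly the assertion that this inner product stays non-negative along the constrained trajectory. Choosing the reference coordinate $k$ so that $b_k<0$ therefore forces this whole cross term to be $\le 0$, so it can be discarded when passing to an upper bound on $\lambda\sum_j a_j$.

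What remains is to control the residual term $\lambda\sum_{j\neq k} a_j\big(1-b_j/b_k\big)$. I would factor out the largest coefficient magnitude and apply Cauchy--Schwarz to $g(i,i,s)$, obtaining $\lambda\sum_{j\neq k} a_j(1-b_j/b_k) \le \max_{j\neq k}\big|\lambda - \tfrac{g_j(t,i,s)}{g_k(t,i,s)}\lambda\big|\cdot\|g(i,i,s)\|_2$. Since $k$ was arbitrary subject to admissibility, taking the minimum over all reference coordinates $k\le J-|r_i|c$ gives the tightest such bound, which is precisely the right-hand side of the stated inequality.

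The hard part will be the final estimate: Cauchy--Schwarz naturally yields the $\ell_2$ norm of the residual coefficient vector $(1-b_j/b_k)_{j\neq k}$, whereas the statement carries only its $\ell_\infty$ (maximum) entry multiplied by $\|g(i,i,s)\|_2$. Making this passage rigorous requires either a norm-equivalence argument that controls the number of surviving coordinates after the sparsity-driven feature removal $R_i$, or an alignment argument showing $g(i,i,s)$ concentrates on the extremal coordinate. I would also need to verify that a feasible reference index $k$ with $b_k<0$ exists and attains the outer minimum, so that dropping the cross term is legitimate for the minimizing $k$ rather than only for a non-optimal one.
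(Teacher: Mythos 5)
Your plan is, modulo notation, the same route the paper itself takes. The appendix proof defines $\xi=\sum_j g_j(t,i,s)\,g_j(i,i,s)\ge 0$ (nonnegative by the relaxed constraint on the left of Equation \eqref{eq:relex_with_eff}) and rewrites $\lambda\sum_j g_j(i,i,s)$ as exactly your split $\lambda\sum_{j\ne k}a_j(1-b_j/b_k)+\lambda\xi/b_k$, then discards the cross term using $\xi\ge 0$ together with the sign condition $g_k(t,i,s)\le 0$. Your worry about whether an admissible reference coordinate with $b_k<0$ exists and attains the outer minimum is handled in the paper by fiat: it assumes $g_j(t,i,s)\le 0$ for \emph{all} $j$ ``when the optimization step is small enough,'' so the bound is established for every $k$ and therefore for the minimum over $k$. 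That assumption is itself unjustified in general, but granting it, your decomposition and sign argument coincide with the paper's.

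The gap you flag in the final estimate is real, and you should know the paper does not close it either. After reaching $\sum_{j\ne k}\bigl|\lambda-\frac{b_j}{b_k}\lambda\bigr|\,|a_j|$, the paper inserts the factor $\frac{|a_j|}{\|g(i,i,s)\|_2}\|g(i,i,s)\|_2$ and then claims $\frac{|a_j|}{\|g(i,i,s)\|_2}\le\frac{|a_j|}{\|g(i,i,s)\|_1}$; since $\|g\|_1\ge\|g\|_2$ for any vector, this inequality runs in the wrong direction. What the H\"older ($\ell_\infty$--$\ell_1$) pairing legitimately yields is $\max_{j\ne k}\bigl|\lambda-\frac{b_j}{b_k}\lambda\bigr|\cdot\|g(i,i,s)\|_1$, while your Cauchy--Schwarz variant yields $\bigl\|(1-b_j/b_k)_{j\ne k}\bigr\|_2\cdot\|g(i,i,s)\|_2$; neither implies the stated $\ell_\infty$-coefficient-times-$\ell_2$-norm bound without extra structure. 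Indeed, taking all ratios $b_j/b_k$ equal and $g(i,i,s)$ flat over $n$ surviving coordinates makes the left side scale like $\|g\|_1\approx\sqrt{n}\,\|g\|_2$, exceeding the claimed right side. So your instinct that a concentration or effective-sparsity argument on $g(i,i,s)$ (e.g., controlling the number of coordinates surviving $R_i$) is needed is exactly correct; as stated, the theorem is only safely provable with $\|g(i,i,s)\|_1$ in place of $\|g(i,i,s)\|_2$, and your write-up should either prove that weaker form or supply the missing concentration hypothesis explicitly.
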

\begin{proof}
Please refer to \ref{A2} for the detailed proof.
\end{proof}

\subsection{Short-term Forgetting with Neuronal Correlation Consolidation}
\label{sec:short}

Human beings are good at learning in noisy environments by distinguishing and blocking spurious associations from real ones via short-term forgetting \cite{richards2017persistence}. One of the neuroscience theories utilized to explain this phenomenon is called background information-gated (BIG) learning theory \cite{larkin1980expert, kintsch1988role, mainetti2015neural}, where the previously learned knowledge and the real associations between the concepts allow only useful and accurate information through the gate for use in future learning. This ensures spurious co-occurrences of events and associations between concepts that are formed by random noise or mislabels will not be learned.
To achieve such a learning selectivity in DNNs, it is first necessary to identify the nodes that encode important concepts learned in the past, as well as the strong associations that are encoded in the synaptic weights between the nodes. 

It is nontrivial to embed BIG theory into DNNs. To do this, we first need to express this theory at the cell-level utilizing relevant BIG-based computational models such as \cite{mainetti2015neural}. Only then can we approximate the biological neuronal operations with artificial neurons in BNNs, including neuron importance and the correlation weights between neurons. This can be considered loosely analogous to the degree of similarity of the axonal connection pattern of biological neurons in BNNs.
Suppose the matrix $\theta_{(l+1)}\in\mathbb{R}^{N_l \times N_{l+1}}$ represents all the weights between neurons in layers $l$ and $l+1$ in the DNNs, where $N_l$ and $N_{l+1}$ represent the numbers of neurons, respectively:
{\small
\begin{gather}
\label{eq:n_correlation}
    A^{(l)}=\frac{1}{N^2_{(l+1)}}\left( 
    h(\theta_{(l+1)})\cdot h(\theta_{(l+1)})^\intercal\right)
    \odot
    \left(h(\theta_{(l+1)})\cdot h(\theta_{(l+1)})^\intercal
    \right) 
\end{gather}}
where $h(\cdot)$ is an indicator function that outputs 1 if the input is non-zero and 0 otherwise, and thus discretizes the weights in the DNNs to simulate the binary nature of the connections in BNNs.
The layer-wise neuron correlation matrix $A^{(l)} \in\mathbb{R}^{N_l \times N_l}$ is a symmetric square matrix that models all the pairwise neuron correlations in layer $l$. Each entry $A^{(l)}_{i,j}$ models the correlation between neuron $i$ and neuron $j$ in terms of the similarity of their connectivity patterns. The higher the value, the stronger the correlation between the two.

Although the discretization in $h(\cdot)$ would make the computation in Equation \eqref{eq:n_correlation} more biologically plausible, it makes the model non-differentiable and difficult to train and optimize. 
In addition, in practice, it is more common to have very small weights than true zero weights and it is therefore hard to define an appropriate threshold to establish the hard partition.
Therefore, to mitigate these, we propose a differentiable approximation of the discrete version of $h(\cdot)$ as follows \footnote{Similar to the ReLU activation function, our formulation introduces a non-differentiable point at zero; we follow the conventional setting by using the sub-gradient for model optimization.}:
{\small
\begin{equation}
\label{eq:W_hat}
h(\theta_{(l+1)})=|tanh(\theta_{(l+1)})|
\end{equation}}
where $|\cdot|$ represents the element-wise absolute operator and $tanh(\cdot)$ represents the element-wise hyperbolic tangent function.
The values of $h(\theta_{(l+1)})\in\mathbb{R}^{N_l \times N_{l+1}}$ will be positive and in the range of $[0,1)$ with the value representing the relative connectivity strength of the synapse between neurons. 

Notably, the neuron correlation matrix $A^{(l)}$ can also be interpreted as an adjacency matrix which reflects the hidden relationship between neurons within layer $l$, even though there is no actual connection between them. 
Thus, similar to the way degree centrality is used to estimate the importance of the node in network science, here we can treat $A^{(l)}$ as a weighted adjacency matrix and compute the weighted degree centrality of each neuron as its importance: $p_i^{(l)}=\sum A_{i, \cdot}^{(l)}$.

Now the neuron importance $ p_i^{(l)}$ has been computed for each neuron, we can naturally assume that the connection between two important neurons will also be important, thus the importance of $[\theta_{(l)}]_{i,j}$ can be estimated by:
$P^{(l)}_{i,j}=p_i^{(l)} * p_j^{(l+1)}$.

Adopting the methods used in existing works on lifelong learning regularization design in \cite{kirkpatrick2017overcoming, zenke2017continual}, we apply the estimated synaptic importance $P^{(l)}_{i,j}$ to regularize the changes of each parameter $[\theta_{(l)}]_{i,j}$ during the course of continual learning via a standard square loss terms as follows:
{\small
\begin{equation}
\label{eq:nac_loss}
    \Omega^{(t)}_r (\theta) = \sum\nolimits_l \sum\nolimits^{N^{(l)}}_i \sum\nolimits^{N^{(l+1)}}_j \left( P^{(l)}_{i,j} ([\theta_{(l)}]_{i,j}-[\hat \theta_{(l)}]_{i,j})^2\right) 
\end{equation}}
where $[\hat \theta_{(l)}]_{i,j}$ is the stored weight between neurons $i$ and $j$ that learned from the previous task.
\subsection{Algorithm}
\label{sec:algorithm}
Algorithm 1 summarizes our proposed SMART learning algorithm. The efficient regularization $\Omega_e (\theta)$ will enable the schematic memory to only store partial details $R_t(X^{(t)})$ of task $t$, as shown in Line 7.
The robust regularization $\Omega^{(t)}_r (\theta)$ regularizes the model parameters and protects the important weights that encode the knowledge learned from previous tasks from label corruption. When the memory is full, we adopt the GSS-Greedy Sample-Selection strategy presented in \cite{aljundi2019gradient}.

\begin{algorithm}[H]
\scriptsize
\caption{SMART algorithm}
\begin{algorithmic}[1]
  \STATE Input: $n, M, \alpha, \beta, \epsilon \leftarrow 1e^{-4}$
  \STATE Initialize: $\theta, \hat{\theta}, R, \mathcal{M}, t\leftarrow 0$
  
  \REPEAT
  
  \STATE Receive: $\{X^{(t)}, Y^{(t)}\}$
  \STATE $\{\hat{X},\hat{Y}\} \leftarrow \{(X^{(t)}, Y^{(t)}) , \mathcal{M}\}$
  \STATE $\theta^{(t)}\leftarrow\argmin_{\theta}\ell(f(\hat{X};\theta), \hat{Y}) + \alpha \Omega_e (\theta) + \beta \Omega^{(t)}_r (\theta)$
  \STATE  $\mathcal{M} \leftarrow$ $\{R_t(X^{(t)}), Y^{(t)}\} \cup \mathcal{M}$
  \IF {len$(\mathcal{M})>M$}
    \STATE $\mathcal{M}\leftarrow$\text{Sample-Selection}$(\mathcal{M}, M)$
  \ENDIF
  \STATE $\hat{\theta} \leftarrow \theta^{(t)} $
  \STATE {$t\leftarrow t+1$}

    \UNTIL {run out of observations}
\end{algorithmic}
\end{algorithm}

The feature sparsity enforced by $R_i(\cdot)$ and schematic memory regularization $\Omega_e (\theta)$ make the SMART algorithm more efficient in terms of both computation complexity and memory complexity.
Specifically, at task $t$ the degree of complexity is bounded by the number of useless features $|r_t|$, as shown in the following remarks:

\noindent\textbf{Memory complexity:} The memory complexity required to store the samples $\{R_t(X^{(t)}), Y^{(t)}\}$ into the memory is $O(N_t (D-|r_t|))$. The greater the $|r_t|$, the lower the memory complexity to store the same amount of samples.

\noindent\textbf{Computational complexity:} The computational complexity of Equation \eqref{eq:relex_with_eff} is $O(J-|r_t|\cdot c)$. The greater the number of useless features $|r_t|$, the lower the computational complexity of the constraints.

\section{Experiments}

In this section, we studied the Disjoint MNIST as the benchmark dataset along with several Civil Unrest datasets for the real-world application of continual event forecasting. All the experiments were conducted on a 64-bit machine with an Intel(R) Xeon(R) W-2155 CPU \@3.30GHz processor, 32GB memory, and an NVIDIA TITAN Xp GPU.

\textbf{Disjoint MNIST Dataset}: The MNIST Benchmark \cite{lecun1998gradient} image dataset divided into 5 tasks similar with the settings in \cite{zenke2017continual, aljundi2019gradient, Hsu18_EvalCL}. Here each task is a binary classification problem between two different digits. 1k examples were used per task for training and the results on all test examples were reported.

\textbf{Civil Unrest Datasets}: Each dataset was obtained from one Latin American country. Five datasets were used respectively for the countries: Brazil, Colombia, Mexico, Paraguay, and Venezuela. 
The tweet texts from Twitter were adopted as the model inputs.
The goal is to utilize one date input to predict whether there will be an event in the next day in different cities (i.e. tasks). 
The event forecasting results were validated against a well-established labeled event set, the Gold Standard Report \cite{Doe:2018:Misc}.
More details about how the datasets are converted into the continual learning setting can be found in \ref{A3}.

\textbf{Model settings}: To ensure a fair comparison, all the experiments used the same feed-forward neural network of 2 hidden layers and 400 neurons. The Adam optimizer \cite{kingma2014adam} was used with a learning rate of 0.0001 for all the models. 
The sparsity factor $\alpha$ introduced in the proposed method was set to 0.0005 by default and the robust regularization factor $\beta$ was set via a grid search across the range 0.0001 to 0.01, depending on the corruption ratio. 
The training batch size was set to 50 and all the models were trained over 100 iterations per batch for all the datasets.

\textbf{Comparison methods}:


\textit{Online Clustering in the gradient space} (GSS-Clust): A reply-based method with an online clustering method in the gradient space for the sample selection. The distance metric was set to the Euclidean distance and the doubling algorithm for incremental clustering described in \cite{charikar2004incremental} was adopted. 

\textit{Online Clustering in the feature space} (FSS-Clust): Similar to GSS-Clust, a reply-based method with an online clustering method in the feature space for the sample selection.

\textit{IQP Gradients} (GSS-IQP): 
A reply-based method with the surrogate proposed in \cite{aljundi2019gradient} to select samples that minimized the feasible region. 

\textit{Gradient greedy selection} (GSS-Greedy): A reply-based method with an efficient greedy selection variant proposed in \cite{aljundi2019gradient}. 




\subsection{Experiments on Disjoint MNIST Benchmark}

\begin{table}\scriptsize
  \caption{Average test accuracy on disjoint MNIST with different buffer sizes. Results are averaged over 10 different random seeds.}
  \label{tab:MNIST}
  \centering
  \begin{tabular}{c|ccccc}
  \toprule
  \diagbox[width=8em]{Method}{Buffer Size}
                &100                &200                &300                
                            &400                &500\\
   \midrule
    GSS-Clust   & 75.53 $\pm$ 1.60  &80.95 $\pm$ 0.98     &85.86 $\pm$ 2.18
                            & 88.38 $\pm$ 1.31     & 89.46 $\pm$ 1.33\\ 
                            
    FSS-Clust   & 70.72 $\pm$ 2.67  &75.13 $\pm$ 1.05     &80.48 $\pm$ 2.24
                            & 86.19 $\pm$ 2.59     & 86.12 $\pm$ 0.62\\
                            
    GSS-IQP     & 78.03 $\pm$ 1.70  &86.93 $\pm$ 1.56     &88.08 $\pm$ 0.89
                            & 90.27 $\pm$ 0.33     & 91.88 $\pm$ 0.57\\
    
    GSS-Greedy  & 68.21 $\pm$ 3.67  &77.61 $\pm$ 2.55     &87.14 $\pm$ 1.56
                            & 89.96 $\pm$ 0.97     & 91.40 $\pm$ 0.53\\
    \hline
    SMART       & \textbf{85.28 $\pm$ 1.42}  &\textbf{90.46 $\pm$ 1.25}     &\textbf{91.63 $\pm$ 0.91}
                            &\textbf{92.71 $\pm$ 1.17}      &\textbf{93.23 $\pm$ 0.49}\\
    \bottomrule
   \end{tabular}
\end{table}

\textbf{Efficient Continual Learning with the Clean Dataset}:
As shown in Table \ref{tab:MNIST}, 
we compared the model performance on the Disjoint MNIST dataset across different buffer sizes. The results are averaged over 10 different random seeds and the best results are highlighted in boldface.
Overall, our proposed SMART model outperformed all the baseline methods by a significant margin. 
Moreover, the gap between SMART and all the other baseline methods was bigger when the buffer size was extremely limited. For instance, SMART outperformed the baseline methods by 9\% - 25\% when the buffer size was 100 and by 5\% - 20\% when the buffer size was 200. This is because the proposed feature selection supported the efficient usage of memory for applications where the memory is extremely limited. 
Notably, the SMART model was able to achieve an equivalent learning capacity to those achieved by the baseline models with 200 additional samples in the buffer space.

To illustrate how the proposed feature selection enhanced the efficient usage of memory in the SMART model, we compared the samples stored in the buffer during the course of continual learning for all models, as shown in Figure \ref{fig:cs_MNIST_sample}. Although the buffer size was only fixed to store a maximum of 300 full samples, the SMART model easily exceeded the cap, storing 500 effective samples during the learning process. This means that with the same amount of storage space, the SMART model can preserve more samples and enjoy a better performance even if the storage space is extremely limited.

\begin{table}\tiny
  \caption{Average test accuracy on disjoint MNIST under different corruption ratios. The buffer size of all the models was fixed to store a maximum of 300 full samples. Results are averaged over 10 different random seeds.}
  \label{tab:MNIST_corrupt}
  \centering
  \begin{tabular}{c|c|ccccc}
  \toprule
  \diagbox[width=7em]{Model}{Ratio}
                &Clean
                &10\%                &20\%                &30\%                
                            &40\%                &50\%              \\
   \midrule
    GSS-Clust   &85.86 $\pm$ 2.18
                &68.28 $\pm$ 1.98      &65.21 $\pm$ 2.92      &60.42 $\pm$ 1.44
                            & 59.23 $\pm$ 0.87     &56.11 $\pm$ 0.89\\ 
    FSS-Clust   &80.48 $\pm$ 2.24
                &69.68 $\pm$ 1.64   &66.41 $\pm$ 2.30      &62.31 $\pm$ 1.83
                            &61.58 $\pm$ 1.12      &58.53 $\pm$ 1.09\\
    GSS-IQP     &88.08 $\pm$ 0.89
                &69.45 $\pm$ 0.53      &63.95 $\pm$ 0.87      &59.28 $\pm$ 0.97
                            & 59.55 $\pm$ 0.84     &56.76 $\pm$ 1.19\\
    
    GSS-Greedy  &87.14 $\pm$ 1.56
                &75.75 $\pm$ 0.93      &71.88 $\pm$ 1.87      &64.92 $\pm$ 1.95
                            & 62.99 $\pm$ 1.02     &59.09 $\pm$ 1.34\\
    \hline
    SMART       &\textbf{93.23 $\pm$ 0.49}
                &\textbf{85.29 $\pm$ 1.58}      &\textbf{81.35 $\pm$ 1.44}      &\textbf{73.91 $\pm$ 1.65}
                            & \textbf{70.08 $\pm$ 2.16}     &\textbf{66.34 $\pm$ 2.63}\\
    \bottomrule
  \end{tabular}
\end{table}

\begin{figure}
\centering
\includegraphics[width=0.9\linewidth]{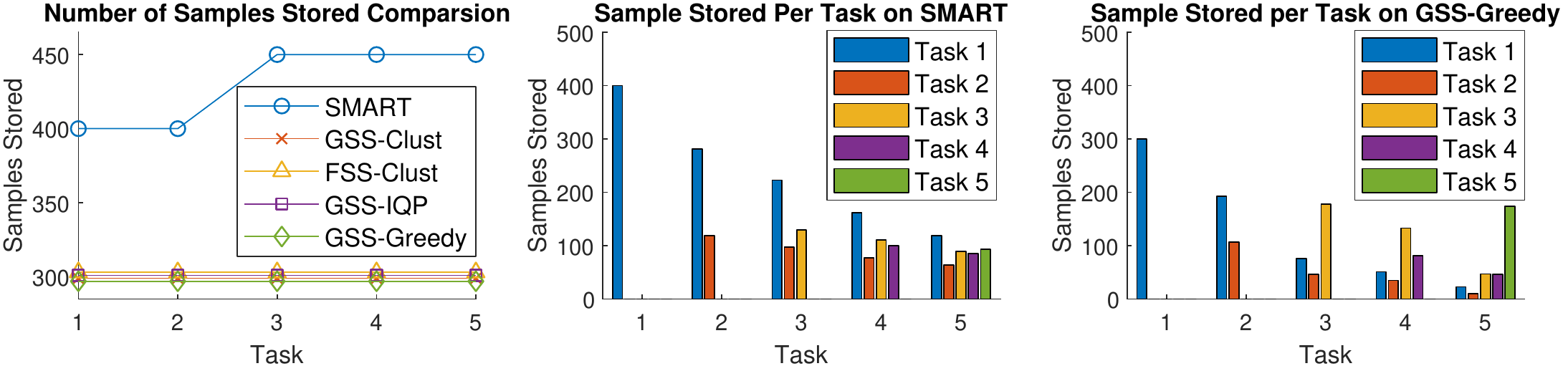}
\caption{A comparison of the total number of samples stored and the samples stored for each task in the buffer during the course of continual learning between SMART and the GSS-Greedy model. The buffer size was fixed to store a maximum of 300 full samples.}
\label{fig:cs_MNIST_sample}
\end{figure}

\textbf{Robust Continual Learning with Label Corruption}:
To further examine the robustness of our proposed SMART framework, we conducted additional experiments on the Disjoint MNIST dataset using different ratios of corrupted labels in the training samples. As shown in Table \ref{tab:MNIST_corrupt}, the SMART model once again appeared to be the most robust model for different levels of corruption. For instance, when 10\% of the training samples were corrupted, the performance of the SMART model only dropped by 8.5\% compared to the clean training data. This contrasts markedly with the performance of all the other baseline methods under the same conditions, all of which dropped significantly, ranging from 13\% - 21\% worse than their clean training counterparts. As a consequence, SMART outperformed the baseline methods even more when corruption was present, outperforming the baseline methods by as much as 15\% - 25\% under 30\% corruption. Other corruption ratios followed the same trend. 
This observation further demonstrates the superior robustness properties of our proposed robust learning regularization.



\textbf{Ablation study of Neuronal Correlation Consolidation (NCC) Regularization}:
We also conducted an ablation study of the proposed NCC regularization and compared it with existing continual learning regularization methods, including L2, EWC \cite{kirkpatrick2017overcoming, schwarz2018progress}, SI \cite{zenke2017continual}, MAS \cite{aljundi2018memory}. The hyperparameter is tuned by a grid search, and the results with the best setting are reported. For a fair comparison, all comparison methods use the same neural network architecture, which is a multi-layered perceptron with two hidden layers of 400 nodes each, followed by a softmax output layer. Both hidden layers use ReLU for the activation function. The loss function is a standard cross-entropy for classification. All models are trained for 4 epochs per task with mini-batch size 128 using the Adam optimizer.

\begin{table}\scriptsize
  \caption{Average test accuracy for the robust continual learning experiments on Split MNIST datasets under different corruption ratios. The results are averaged over 10 random seed runs.}
  \label{tab:ablation_NCC_MNIST}
  \centering
  \begin{tabular}{c|c|cccc}
  
    \toprule
    \diagbox[width=12em]{Regularization}{Corruption Ratio}                   &Clean              &10\%
                            &30\%
                            &50\%
                            &70\%\\               
    \midrule
    Vanilla                 &59.21$\pm$2.04       &58.19$\pm$1.26         
                            &58.99$\pm$0.54              
                            &58.69$\pm$0.30                 
                            &57.27$\pm$0.56     \\ 
    EWC                     &58.85$\pm$2.59     &58.62$\pm$1.68         
                            &58.83$\pm$0.65        
                            &58.59$\pm$0.34        
                            &57.53$\pm$0.74      \\
    SI                      &65.76$\pm$3.09       &64.07$\pm$4.53         
                            &63.26$\pm$6.37         
                            &62.12$\pm$6.81                
                            &57.73$\pm$8.91         \\
    L2                      &66.00$\pm$3.73     &62.20$\pm$1.98         
                            &59.13$\pm$0.56         
                            &58.77$\pm$0.22                 
                            &57.57$\pm$0.77         \\
    MAS                     &68.57$\pm$6.85     &63.81$\pm$5.95         
                            &61.24$\pm$6.00              
                            &59.31$\pm$5.91               
                            &56.99$\pm$3.76        \\
    \hline
    NCC                     &\textbf{82.24$\pm$0.91}  &\textbf{76.64$\pm$1.68} 
                            &\textbf{70.45$\pm$2.29}          
                            &\textbf{65.83$\pm$2.16}         
                            &\textbf{64.21$\pm$2.94}\\
    \bottomrule
  \end{tabular}
\end{table}

As shown in Table \ref{tab:ablation_NCC_MNIST}, the overall performance of the proposed NCC regularization outperforms all baseline regularization methods by a significant margin on both clean data and data with various corruption ratios. Specifically, the model with NCC regularization outperforms the model without any regularization (i.e. Vanilla) by 12\% - 32\% and outperforms the models with other existing regularization by up to 30\%. Interestingly, the baseline model with SI regularization also achieves some level of robustness, probability because the neuroscience inspiration behind the SI algorithm also enhanced the robustness property of the model. 
However, overall it is not as effective as the proposed NCC model. Besides, the performance of the SI model is much unstable with the relatively highest standard deviations across almost all corruption ratios. A similar issue can be also observed in MAS regularization. 

Beside just comparing the performance under the same corruption ratio, we are also interested in how the performance evolved as the corruption ratio gradually increases. As we can see in Table \ref{tab:ablation_NCC_MNIST}, all the baseline models tend to converge to around 57\% accuracy as the corruption ratio continues to increase. However, the model with NCC regularization tends to converge to around 64\% accuracy in the same scenario, leaving an around 12\% performance gap. This observation demonstrates the superior robustness property of the proposed NCC regularization.

\subsection{Experiments on Continual Event Forecasting}
\begin{table*}[ht]\tiny
  \caption{Average test accuracy for the robust continual learning experiments on civil unrest datasets under different label corruption ratios. 
  The buffer size of all the models were fixed to store a maximum of 300 full samples. Results are averaged over 10 different random seeds.}
  \centering
  \label{tab:civil_corrupt}
  \begin{tabular}{c|c|c|cccc}
    \toprule

    Dataset     & \diagbox[width=7em]{Model}{Ratio} & Clean
                            & 10\%                  & 20\%      
                            & 30\%                  & 40\% \\
    \hline
    \multirow{3}{*}{Brazil}&
                
                GSS-Clust   &78.57 $\pm$ 1.43   & 71.66 $\pm$ 0.72       &65.68 $\pm$ 2.37
                                & 61.37 $\pm$ 2.01       &53.51 $\pm$ 2.81\\&
                FSS-Clust   &79.59 $\pm$ 1.84   & 72.46 $\pm$ 0.80       &65.31 $\pm$ 2.98
                                & 60.55 $\pm$ 3.16       & 53.85 $\pm$ 1.47\\&
                GSS-IQP     & 78.02 $\pm$ 0.98  & 71.75 $\pm$ 0.86       &65.87 $\pm$ 0.84
                                & 62.83 $\pm$ 1.24       & 54.15 $\pm$ 0.97\\&
                GSS-Greedy  & 77.59 $\pm$ 2.37 & 73.70 $\pm$ 0.42       & 67.84 $\pm$ 0.85 
                                & 63.27 $\pm$ 1.98       & 55.94 $\pm$ 3.72\\&
                            
                SMART     &\textbf{83.68 $\pm$ 2.10}&\textbf{76.78 $\pm$ 1.14}& \textbf{71.38 $\pm$ 0.58}
                            & \textbf{66.59 $\pm$ 0.37}     & \textbf{57.36 $\pm$ 2.18}\\
    \hline
    \multirow{3}{*}{Colombia}&
                GSS-Clust   &71.65 $\pm$ 0.88 & 66.39 $\pm$ 1.15       & 64.52 $\pm$ 4.63
                                & 58.53 $\pm$ 2.95       & 51.39 $\pm$ 1.31\\&
                FSS-Clust   &76.27 $\pm$ 2.10& 70.40 $\pm$ 0.95       & 64.97 $\pm$ 1.65
                                & 60.53 $\pm$ 1.70       & 54.15 $\pm$ 3.25\\&
                GSS-IQP     &77.23 $\pm$ 1.02& 72.05 $\pm$ 1.00       & 64.78 $\pm$ 2.42
                                & 61.37 $\pm$ 2.07       & 54.29 $\pm$ 1.79\\&
                GSS-Greedy  &75.71 $\pm$ 1.68& 71.17 $\pm$ 1.21       & 66.79 $\pm$ 1.08
                                & 61.38 $\pm$ 2.96       & 52.85 $\pm$ 0.81\\&
                
                SMART       &\textbf{81.43 $\pm$ 1.12} & \textbf{74.76 $\pm$ 1.26 }      & \textbf{70.92 $\pm$ 2.71}
                                & \textbf{65.12 $\pm$ 3.21}       & \textbf{57.93 $\pm$ 1.16}\\
    \hline
    \multirow{3}{*}{Mexico}&
                GSS-Clust   &64.51 $\pm$ 0.85& 60.94 $\pm$ 0.90      & 57.42 $\pm$ 0.98
                                & 56.19 $\pm$ 0.87      & 53.78 $\pm$ 0.74\\&
                FSS-Clust   &64.94 $\pm$ 0.98& 61.52 $\pm$ 0.96      & 58.42 $\pm$ 0.92
                                & 56.46 $\pm$ 0.62      & 54.27 $\pm$ 0.71\\&
                GSS-IQP     &66.87 $\pm$ 0.56& 63.32 $\pm$ 0.64      & 59.03 $\pm$ 0.78
                                & 56.55 $\pm$ 0.41      & 53.73 $\pm$ 0.91\\&
                GSS-Greedy  &65.45 $\pm$ 0.16& 62.12 $\pm$ 0.79      & 59.26 $\pm$ 0.52
                                & 56.73 $\pm$ 0.70       &53.57 $\pm$ 0.53\\&
                SMART       &\textbf{70.98 $\pm$ 1.54}& \textbf{67.61 $\pm$ 0.45}      & \textbf{63.48 $\pm$ 0.46}
                                & \textbf{59.34 $\pm$ 0.43}       &\textbf{56.58 $\pm$ 0.81}\\
    \hline
    \multirow{3}{*}{Paraguay}&
                GSS-Clust   &66.81 $\pm$ 2.18& 64.36 $\pm$ 1.56      & 60.16 $\pm$ 1.37
                                & 57.84 $\pm$ 1.95      & 54.27 $\pm$ 0.73\\&
                FSS-Clust   &67.01 $\pm$ 0.95& 63.84 $\pm$ 1.36      & 59.53 $\pm$ 1.24
                                & 57.64 $\pm$ 1.58      & 53.75 $\pm$ 2.07\\&
                GSS-IQP     &64.17 $\pm$ 1.94& 62.93 $\pm$ 0.91      & 58.47 $\pm$ 1.47
                                & 56.35 $\pm$ 1.73      & 52.25 $\pm$ 1.45\\&
                GSS-Greedy  &65.77 $\pm$ 1.47& 63.68 $\pm$ 0.28      & 59.61 $\pm$ 1.64
                                & 57.70 $\pm$ 1.57      & 54.40 $\pm$ 1.60\\&
                            
                SMART     & \textbf{69.14 $\pm$ 1.83}&\textbf{66.03 $\pm$ 0.66}       &\textbf{62.11 $\pm$ 2.63}
                            & \textbf{58.91 $\pm$ 0.74}       &\textbf{55.12 $\pm$ 1.06}\\
    \hline
    \multirow{3}{*}{Venezuela}&
                GSS-Clust   &66.00 $\pm$ 1.45& 63.61 $\pm$ 1.54      & 59.87 $\pm$ 0.93
                                & 56.14 $\pm$ 0.64      & 52.56 $\pm$ 0.55\\&
                FSS-Clust   &69.28 $\pm$ 1.19& 64.73 $\pm$ 0.76      & 60.57 $\pm$ 0.43
                                & 57.47 $\pm$ 0.74      & 51.73 $\pm$ 0.57\\&
                GSS-IQP     &69.67 $\pm$ 0.79& 65.46 $\pm$ 0.39      & 60.78 $\pm$ 0.88
                                & 57.18 $\pm$ 0.48      & 53.54 $\pm$ 0.37\\&
                GSS-Greedy  &69.70 $\pm$ 0.75& 66.52 $\pm$ 0.31      & 61.91 $\pm$ 0.71
                                & 57.38 $\pm$ 1.27      & 52.73 $\pm$ 0.63\\&
                SMART    & \textbf{73.82 $\pm$ 0.18}&\textbf{69.23 $\pm$ 0.68}       &\textbf{64.78 $\pm$ 0.63}
                            & \textbf{60.73 $\pm$ 1.07}      & \textbf{55.95 $\pm$ 0.75}\\
    \bottomrule
  \end{tabular}
\end{table*}

\begin{figure}
\begin{center}
    \includegraphics[width=0.48\textwidth]{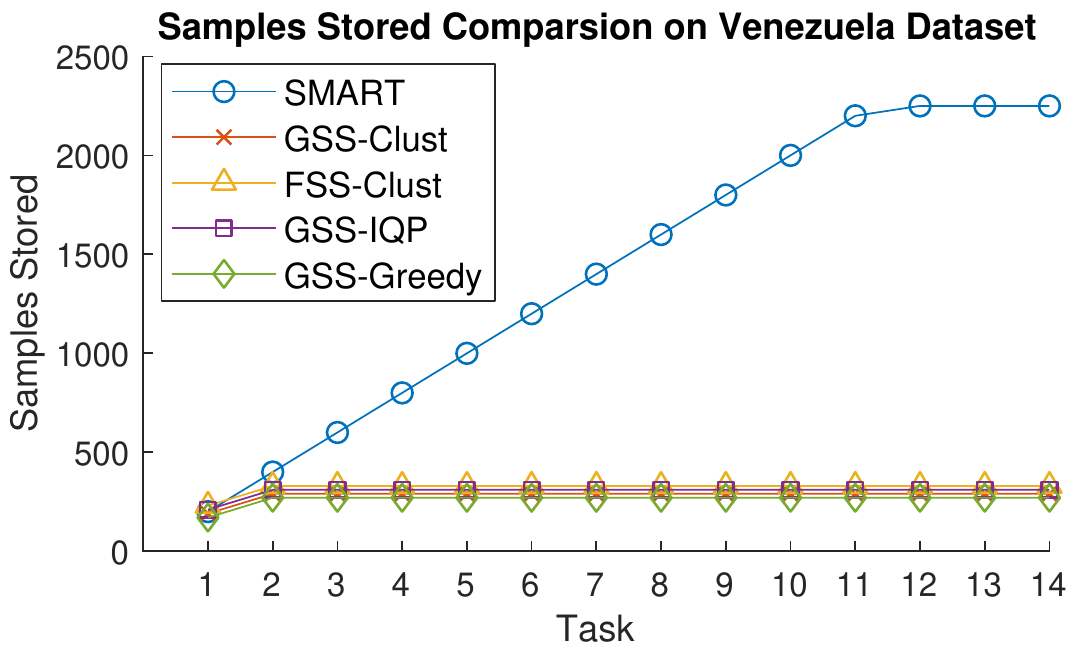}
\end{center}
\caption{A comparison of the total number of samples stored in the buffer during learning on the Venezuela Dataset. The buffer size was fixed to store at a maximum of 300 full samples. The SMART model was able to greatly exceed the cap as only a very few features were stored for each sample. Other counties followed the same trend.}
\label{fig:cs_civil_sample}
\end{figure}

\textbf{Efficient Continual Learning with the Clean Datasets}: 
The column ’Clean’ in Table \ref{tab:civil_corrupt} summaries the model performance on the five civil unrest datasets. The buffer size of all the models tested here was fixed to store a maximum of 300 full samples. 
The results were averaged over 10 different random seeds and the best results are highlighted in boldface.
Overall, our proposed SMART model achieved the best results across the datasets for all five countries, and outperforming all the baseline methods by a significant margin. On average, SMART outperformed baseline methods by 5\% - 8\% across the five datasets. 
This is because the proposed schematic memory enhanced both the efficient usage of the memory buffer and facilitated a more semantically meaningful feature selection.
To further illustrate this effect, we visualized the total number of samples stored in the buffer of the SMART model during the course of continual learning over the five Civil Unrest Datasets, as shown in Figure \ref{fig:cs_civil_sample}.  Again the buffer size is fixed to store at a maximum of 300 full samples. This time SMART is able to massively boost the number of effective samples stored, ranging from 600\% to 1000\%.
This is due to the fact that only a very few features are actually important for the prediction task and the schematic memory introduced that helps the model to gradually forget most of the non-useful features over time gradually builds up the most important features for the learning task. We conducted a case study in Section \ref{sec:case_study} and showed how the top selected features that are highly relevant to civil unrest events evolved on Brazil dataset.

\textbf{Robust Continual Learning with Label Corruption}: 
Table \ref{tab:civil_corrupt} shows the robust lifelong learning experiments on five civil unrest datasets under different label corruption ratios. Once again, the SMART model was the most robust model against different levels of corruption for all five datasets. 
This observation confirms the superior robustness property of the proposed model across various datasets and application domains.

\subsection{Comparison with Task-Aware Methods}
In this section, we compare the proposed method with State-of-the-art task-aware methods, which leverage the task boundary to decide and manage the memory allocation for samples. 

\textbf{Comparison methods}:

\textit{GEM} \cite{lopez2017gradient}: stores a fixed amount of random examples per task and uses them to provide constraints when learning new examples.

\textit{iCaRL} \cite{rebuffi2017icarl}: follows an incremental classification setting. It also stores a fixed number of examples per class but uses them to rehearse the network when learning new information.

\begin{table*}[ht]
  \caption{Performance comparison with state-of-the-art task-aware replay methods on MNIST dataset and civil unrest datasets. The average test accuracy are averaged over 10 different random seeds. The buffer size of all the models were fixed to store a maximum of 300 full samples. }
  \centering
  \label{tab:task_aware}
  \begin{tabular}{c|c|c|c}
    \toprule
    \diagbox[width=8em]{Dataset}{Model}     & iCaRL             & GEM               & SMART \\
    \midrule
    Disjoint MNIST                          & 83.27 $\pm$ 2.93  & 88.93 $\pm$ 1.07  & \textbf{91.63 $\pm$ 0.91}\\
    \hline
    Civil-Brazil                            & 73.76 $\pm$ 3.45  & 78.58 $\pm$ 0.44  & \textbf{83.68 $\pm$ 2.10}\\
    Civil-Colombia                          & 73.60 $\pm$ 1.72  & 75.29 $\pm$ 1.04  & \textbf{81.43 $\pm$ 1.12}\\
    Civil-Mexico                            & 63.11 $\pm$ 0.89  & 64.94 $\pm$ 1.11  & \textbf{70.98 $\pm$ 1.54}\\
    Civil-Paraguay                          & 63.21 $\pm$ 0.58  & 64.96 $\pm$ 1.57  & \textbf{69.14 $\pm$ 1.83}\\
    Civil-Venezuela                         & 64.61 $\pm$ 2.19  & 67.61 $\pm$ 0.49  & \textbf{73.82 $\pm$ 0.18}\\
    \bottomrule
  \end{tabular}
\end{table*}

Naturally, like other sample selection based methods, our method ignores those tasks information which can place us at a disadvantage.
Despite this disadvantage,
our proposed model was able to get a competitive or even better performance across different application domains. 
As shown in Table \ref{tab:task_aware}, the proposed model outperformed the baselines by 3\% - 10\% on MINST dataset and by 6\% - 13\% on Civil Unrest datasets, which further demonstrated the superior advantages of the efficient sample storage as well as the sample selection algorithm in the SMART model.

\subsection{Parameter Sensitivity Analysis}

\begin{figure}[htb!]
\centering
\includegraphics[width=0.8\linewidth]{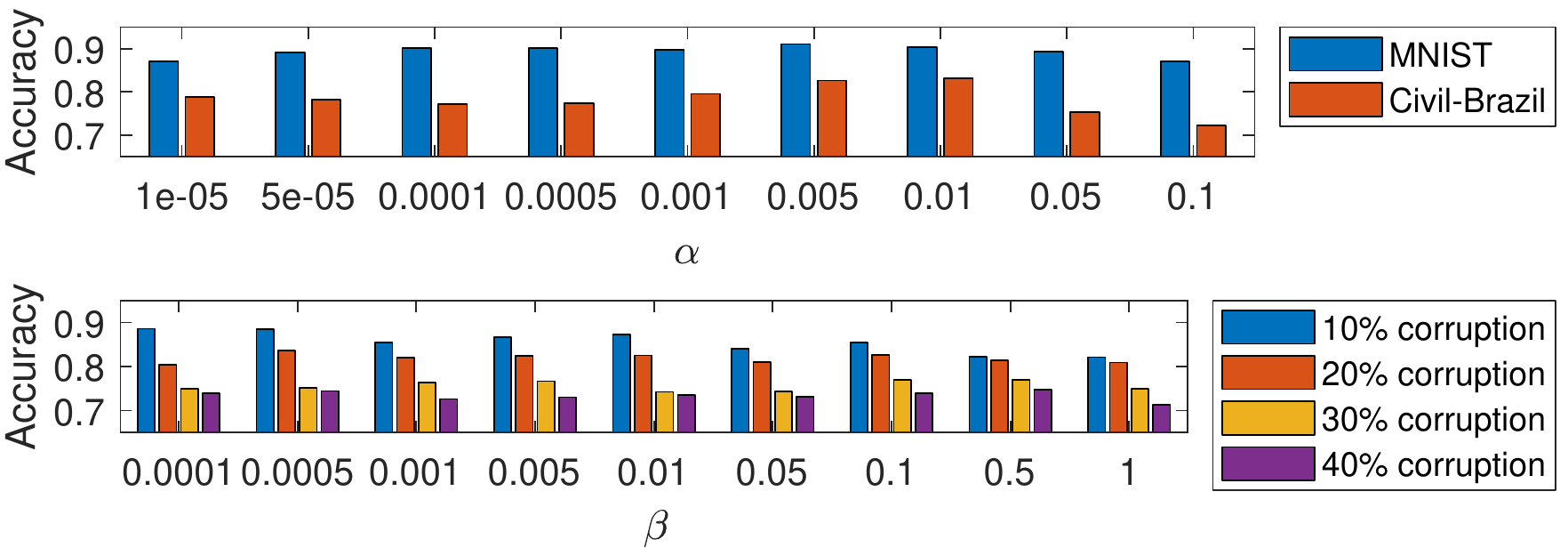}
\vspace{-10pt}
\caption{The sensitivity analysis of two regularization factors $\alpha$ and $\beta$ for the long-term and short-term forgetting regularization. The buffer size was fixed to store a maximum of 300 full samples.}
\label{fig:sensitivity}
\end{figure}

There are two hyper-parameters in the proposed SMART model, where $\alpha$ controls the schematic memory regularization and $\beta$ controls the proposed neuronal correlation consolidation regularization. 
Figure \ref{fig:sensitivity} shows the effect on the overall accuracy of the model when varying $\alpha$ and $\beta$ respectively. For $\alpha$ the results for Brazil within civil unrest datasets and MNIST dataset are shown here. As shown in the top bar chart in Figure \ref{fig:sensitivity}, by varying $\alpha$ across the range from 0.00001 to 0.1, the performance of the MNIST dataset is stable, with the fluctuation ranges less than 4\%. For the civil unrest dataset, the fluctuation range is 10\%. The best performance is obtained when $\alpha=0.005$. We can also see a clear trend where the accuracy decrease when $\alpha$ is too large or too small. The bottom bar chart illustrates the performance of the model versus $\beta$ on the MNIST dataset with various corruption ratios. The fluctuation ranges around 4-6\% on accuracy. It is also worth noting that in general when the corruption ratio increases, the $\beta$ to get the best performance also increases accordingly. Specifically, the $\beta$ that achieve the best performance are 0.0001, 0.0005, 0.1, 0.5 for corruption ratio 10\%, 20\%, 30\%, and 40\%, respectably. This demonstrated the effectiveness of the proposed robust regularization, as the model needs to rely more on the NCC regularization to be more robust against high corruption in the dataset.

\subsection{Case Study of Features (Keywords) Selected by SMART for the Brazil Civil Unrest Dataset}
\label{sec:case_study}

\begin{table}[htb!]
  \caption{Top 15 features (keywords) selected by SMART for the Brazil Civil Unrest datasets over time (all keywords have been translated to English using Google Translate). The keywords in boldface are commonly selected during the course of learning and have high correspondences with civil unrest events.}
  \label{tab:keyword_list}
        \resizebox{\linewidth}{!}{
        \centering
        \begin{tabular}{c|c|c|c|c|c|c|c}
        \toprule
         T1                     &T2                         &T3                     &T4                 &T5                    &T6                    &T7                         &T8   \\
        \midrule
         development    &    \textbf{failures}    &    \textbf{failures}    &    \textbf{failures}    &    \textbf{workplace accident}    &    essential    &    essential    &    essential \\[1ex]
 prison    &    \textbf{new law}    &    essential    &    essential    &    essential    &    accident at work    &    accident at work    &    accident at work    \\[1ex]
 damage    &    prison    &    prepare march    &    \textbf{workplace accident}    &    mother earth    &    \textbf{private property}    &    \textbf{private property}    &    \textbf{private property} \\[1ex]
prepare march    &    \textbf{private property}    &    purse    &    \textbf{private property}    &    mobilization    &    \textbf{failures}    &    \textbf{failures}    &    \textbf{nation country}\\[1ex]
 essential    &    damage    &    human chain    &    prison    &    impose    &    \textbf{new law}    &    \textbf{new law}    &    negotiations    \\[1ex]
 deforestation    &    assistants    &    \textbf{private property}    &    reforestation    &    \textbf{private property}    &    mother earth    &    \textbf{nation country}    &    \textbf{coup}    \\[1ex]
 \textbf{private property}    &    demonstration    &    prison    &    \textbf{nation country}    &    \textbf{new law}    &    reforestation    &    safety    &    \textbf{new law}   \\[1ex]
 sanctions    &    essential    &    investors    &    purse    &    dictatorship    &    \textbf{nation country}    &    reforestation    &    \textbf{criminal act}    \\[1ex]
 police operation    &    prepare march    &    \textbf{workplace accident}    &    boo    &    demand    &    alliance    &    \textbf{war crime}    &    \textbf{property rights}   \\[1ex]
 solidarity    &    \textbf{war crime}    &    dictatorship    &    prepare march    &    reforestation    &    energy production    &    \textbf{property rights}    &    \textbf{failures}\\[1ex]
 prohibition    &    human chain    &    genocide    &    human chain    &    finance    &    prepare march    &    organized    &    military action    \\[1ex]
 opportunities    &    corrupt    &    alliance    &    demand    &    security    &    \textbf{war crime}    &    \textbf{property rights}    &    energy production    \\[1ex]
 atrocity    &    property directors    &    \textbf{new law}    &    trials    &    human chain    &    incinerate    &    criminalize    &    alternatives   \\[1ex]
 pollution    &    left    &    \textbf{nation country}    &    alliance    &    alliance    &    victims    &    unemployment    &    conflict    \\[1ex]
 environmentalists    &    \textbf{coup}    &    \textbf{war crime}    &    rebel    &    \textbf{nation country}    &    necessity    &    necessity    &    \textbf{war crime}\\
        \bottomrule
        \end{tabular}
        }
\end{table}

Table \ref{tab:keyword_list} shows a case study of the top 15 features (keywords) selected by SMART for the Brazil Civil Unrest datasets over time. all keywords have been translated from Portuguese to English using Google Translate. The keywords in boldface are commonly selected during the course of learning and have high correspondences with civil unrest events. We can see a clear trend from left to right that during the course of continual learning, the SMART model was able to capture more relevant keywords that can be best used to predict civil unrest events. Thus with the help of schematic memory, the SMART model was able to gradually forget most of the non-useful features over time and gradually build up the most important features for the learning task.

\section{Conclusion}
This paper proposes a novel ScheMAtic memory peRsistence and Transience (SMART) framework for continual learning with external memory based on recent advances in neuroscience. The new framework's efficiency and generalizability are enhanced by a novel long-term forgetting mechanism and schematic memory, using sparsity and ‘backward positive transfer’ constraints with theoretical guarantee on the error bound. Furthermore, a robust enhancement is introduced that incorporates a novel short-term forgetting mechanism inspired by background information-gated (BIG) learning theory. Finally, an extensive experimental analysis on both benchmark and real-world datasets demonstrates the effectiveness and efficiency of the proposed models.

\section{Acknowledgement}
This work was supported by the NIH Grant No. R01NS39600, the NSF Grant No. 1755850, No. 1841520, No. 2007716, No. 2007976, No. 1942594, No. 1907805, a Jeffress Memorial Trust Award, Amazon Research Award, NVIDIA GPU Grant, and Design Knowledge Company (subcontract number: 10827.002.120.04)

\bibliographystyle{elsarticle-num}
\bibliography{sample-base}

\appendix

\section{Lemma 1's Proof}
\label{A1}

\renewcommand\thelemma{1}
\begin{lemma}
$\mathcal{L}(R_i(X^{(i)});\theta) \approx 
    \mathcal{L}(R_i(X^{(i)});\theta^{(t-1)})+\sum\limits_{s=0}^S\sum\limits_{j}\frac{ \partial\ell(f(R_i(X^{(i)});\tilde\theta^{(s)}), Y^{(i)})}{\partial \tilde\theta_j^{(s)}} \cdot \lambda$, the LHS is inifinitely approaching to the RHS when $\lambda\rightarrow 0$.
\end{lemma}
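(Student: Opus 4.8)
The plan is to prove the lemma as a telescoping first-order expansion: I would decompose the change in the task-$i$ loss along the chain of parameter updates that carries $\theta^{(t-1)}$ to the current $\theta$, apply a first-order Taylor expansion at each step, and show that the accumulated higher-order terms vanish as $\lambda\to0$, leaving exactly the stated double sum.

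First I would make the intermediate iterates explicit. The current parameter $\theta$ is obtained from the previous task's parameter $\theta^{(t-1)}$ through a finite sequence of update steps; writing their restricted (nonzero-feature) versions as $\tilde\theta^{(0)},\tilde\theta^{(1)},\dots$ with $\tilde\theta^{(0)}=\tilde\theta^{(t-1)}$ and the final iterate equal to $\tilde\theta$, I decompose the loss difference as a telescoping sum,
\begin{equation*}
\mathcal{L}(R_i(X^{(i)});\theta)-\mathcal{L}(R_i(X^{(i)});\theta^{(t-1)})
=\sum_{s=0}^{S}\Big[\mathcal{L}(R_i(X^{(i)});\tilde\theta^{(s+1)})-\mathcal{L}(R_i(X^{(i)});\tilde\theta^{(s)})\Big].
\end{equation*}

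Next I would expand each summand with the multivariate first-order Taylor formula, taking the per-coordinate step size to be $\lambda$ so that $\tilde\theta_j^{(s+1)}-\tilde\theta_j^{(s)}=\lambda$ for each $j$. This yields
\begin{equation*}
\mathcal{L}(R_i(X^{(i)});\tilde\theta^{(s+1)})-\mathcal{L}(R_i(X^{(i)});\tilde\theta^{(s)})
=\sum_{j}\frac{\partial\ell(f(R_i(X^{(i)});\tilde\theta^{(s)}),Y^{(i)})}{\partial\tilde\theta_j^{(s)}}\cdot\lambda+\rho_s,
\end{equation*}
where $\rho_s$ gathers the second- and higher-order terms. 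Under the assumption that $\ell$ and the network map $f$ are twice continuously differentiable and that the iterates stay in a compact region (so the Hessian is bounded along the trajectory), Taylor's theorem with remainder gives $\rho_s=O(\lambda^2)$ uniformly in $s$. Summing over $s$ reproduces the double sum on the right-hand side of the lemma together with the aggregate remainder $\sum_{s}\rho_s$.

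Finally I would control the error term. Each $\rho_s$ is higher-order in $\lambda$ relative to the leading $O(\lambda)$ contribution, so the aggregate remainder is negligible compared with the first-order sum; letting $\lambda\to0$ collapses it and establishes the asserted asymptotic equality ``$\approx$''. The step I expect to be the main obstacle is the uniform remainder control: one must verify that the optimization iterates remain in a bounded set where the second derivatives of $\ell\circ f$ are bounded, so that the per-step $O(\lambda^2)$ estimate holds simultaneously for all $s$ and the sum of remainders still vanishes in the limit (even if the number of steps needed to reach $\theta$ grows as $\lambda$ shrinks). Once this regularity is secured, the telescoping-plus-Taylor argument closes the proof, and the resulting first-order expression is exactly the quantity subsequently bounded in Theorem \ref{thm:each_bound}.
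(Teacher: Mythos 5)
Your proposal follows essentially the same route as the paper's proof in \ref{A1}: a telescoping decomposition of $\mathcal{L}(R_i(X^{(i)});\theta)-\mathcal{L}(R_i(X^{(i)});\theta^{(t-1)})$ along the intermediate iterates $\theta^{(t,0)},\dots,\theta^{(t,S)}$ with $\theta^{(t,0)}=\theta^{(t-1)}$, a first-order Taylor expansion with per-coordinate step $\lambda$ at each stage, and the limit $\lambda\to 0$ absorbing the higher-order terms. The only difference is one of rigor rather than route: you explicitly track the per-step remainders $\rho_s=O(\lambda^2)$ and note the need for uniform control so that the aggregate remainder vanishes even if $S$ grows as $\lambda$ shrinks, a point the paper leaves implicit in its chain of $\approx$ signs.
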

\begin{proof}
Define $\theta^{(t,0)}\equiv\theta^{(t-1)}$ and $\theta\equiv\theta^{(t,S)}$, we have:
\scriptsize
\begin{align}
\ell(f(R_i(X^{(i)});\theta), Y^{(i)}) &\approx \ell(f(R_i(X^{(i)});\theta^{(t,S-1)}), Y^{(i)})+\sum\limits_{j}\frac{ \partial\ell(f(R_i(X^{(i)});\theta^{(S-1)}), Y^{(i)})}{\partial \theta_j^{(S-1)}} \cdot \lambda\\
&\approx \ell(f(R_i(X^{(i)});\theta^{(t,S-2)}), Y^{(i)})+\sum\limits_{s={S-2}}^{S-1}\sum\limits_{j}\frac{ \partial\ell(f(R_i(X^{(i)});\theta^{(s)}), Y^{(i)})}{\partial \theta_j^{(s)}} \cdot \lambda\\
&\cdots\\
&\approx \ell(f(R_i(X^{(i)});\theta^{(t,1)}), Y^{(i)})+\sum\limits_{s=1}^S\sum\limits_{j}\frac{ \partial\ell(f(R_i(X^{(i)});\theta^{(s)}), Y^{(i)})}{\partial \theta_j^{(s)}} \cdot \lambda\\
&\approx \ell(f(R_i(X^{(i)});\theta^{(t,0)}), Y^{(i)})+\sum\limits_{s=0}^S\sum\limits_{j}\frac{ \partial\ell(f(R_i(X^{(i)});\theta^{(s)}), Y^{(i)})}{\partial \theta_j^{(s)}} \cdot \lambda\\
&= \ell(f(R_i(X^{(i)});\theta^{(t-1))}), Y^{(i)})+\sum\limits_{s=0}^S\sum\limits_{j}\frac{ \partial\ell(f(R_i(X^{(i)});\theta^{(s)}), Y^{(i)})}{\partial \theta_j^{(s)}} \cdot \lambda\\
&= \ell(f(R_i(X^{(i)});\theta^{(t-1))}), Y^{(i)})+\sum\limits_{s=0}^S\sum\limits_{j}\frac{ \partial\ell(f(R_i(X^{(i)});\tilde\theta^{(s)}), Y^{(i)})}{\partial \tilde\theta_j^{(s)}} \cdot \lambda
\end{align}
\normalsize
and we have 
\begin{equation}
\scriptsize
\ell(f(R_i(X^{(i)});\theta), Y^{(i)})= \lim\limits_{\lambda\rightarrow 0}\left(\ell(f(R_i(X^{(i)});\theta^{(t-1))}), Y^{(i)})+\sum\limits_{s=0}^S\sum\limits_{j}\frac{ \partial\ell(f(R_i(X^{(i)});\tilde\theta^{(s)}), Y^{(i)})}{\partial \tilde\theta_j^{(s)}} \cdot \lambda\right)
\end{equation}
where $\theta=\theta^{(t-1)}+S\cdot \lambda$.
\end{proof}

\section{Theorem 2's Proof}
\label{A2}
Since the \emph{positive backward transfer} constraints are prohibitively hard to maintain efficiently, here we propose to transform them into left-hand-side of Equation \eqref{eq:relex_with_eff}, which incorporates schematic memory and theoretical guarantee on the transformation error stated in Theorem 1.
\small
\begin{gather}
\label{eq:relex_with_eff}
    g_*(t,i)\cdot g(i,i,t-1)^\intercal
    \geq 0\ \ \ \implies \ \ \ \mathcal{L}(R_i(X^{(i)});\theta) \leq 
    \mathcal{L}(R_i(X^{(i)});\theta^{(t-1)})+\varepsilon
\end{gather}
\normalsize

\renewcommand\thetheorem{2}
\begin{theorem}
\label{thm:each_bound}

We have the following Equation:
{\small
\begin{align}
\sum\limits_{j}\frac{ \partial\ell(f(R_i(X^{(i)});\tilde\theta^{(s)}), Y^{(i)})}{\partial \tilde\theta_j^{(s)}} \cdot \lambda\le\min_k\max_{j\ne k} \left| \lambda- \frac{g_j(t,i,s)}{g_k(t,i,s)}\cdot \lambda\right|\cdot\|g(i,i,s)\|_2
\end{align}}
\end{theorem}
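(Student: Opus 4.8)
The plan is to establish the per-step inequality in Theorem~\ref{thm:each_bound} and then recover Theorem~\ref{thm:error_bound} by summing the bound over $s=0,\dots,S$, which is precisely the decomposition furnished by Lemma~\ref{lm:approximation}. Fix the step index $s$ and a task pair $(t,i)$. The left-hand side $\sum_j g_j(i,i,s)\,\lambda$ is the first-order change in the old-task loss $\mathcal{L}(R_i(X^{(i)});\tilde\theta^{(s)})$ caused by one update, i.e.\ the inner product $\lambda\langle g(i,i,s),\mathbf{1}\rangle$. The idea is to compare this uniform step against the current-task descent direction $g(t,i,s)$ that actually drives the optimization, and to charge the mismatch to the approximation error $\varepsilon$.

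First I would pick any reference coordinate $k\le J-|r_i|c$ with $g_k(t,i,s)\neq 0$ and introduce the normalized ratios $w_j := g_j(t,i,s)/g_k(t,i,s)$, so that $w_k=1$. Inserting the identity $1=w_j+(1-w_j)$ into the sum gives
\begin{equation}
\lambda\sum_j g_j(i,i,s)=\frac{\lambda}{g_k(t,i,s)}\langle g(i,i,s),g(t,i,s)\rangle+\lambda\sum_{j\neq k} g_j(i,i,s)\,(1-w_j),
\end{equation}
where the $j=k$ contribution to the residual drops because $w_k=1$. The first (aligned) term is exactly the quantity controlled by the relaxed positive-backward-transfer condition in Equation~\eqref{eq:relex_with_eff}: when $g_*(t,i)\cdot g(i,i,t-1)^\intercal\ge 0$ holds (together with its per-step analogue at $\tilde\theta^{(s)}$), this inner product does not increase the old-task loss, so the aligned term is nonpositive and can be discarded from an upper bound.

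It then remains to control the residual $\lambda\sum_{j\neq k} g_j(i,i,s)(1-w_j)$. I would apply Cauchy--Schwarz, pulling the coordinate-wise factor $\lambda(1-w_j)=\lambda-\frac{g_j(t,i,s)}{g_k(t,i,s)}\lambda$ out through its largest magnitude over $j\neq k$ and retaining $\|g(i,i,s)\|_2$ as the gradient factor, which yields $\max_{j\neq k}\big|\lambda-\frac{g_j(t,i,s)}{g_k(t,i,s)}\lambda\big|\cdot\|g(i,i,s)\|_2$. Since the reference $k$ was arbitrary, taking $\min_k$ gives the tightest such bound and reproduces the right-hand side of the theorem; summing over $s$ then produces Theorem~\ref{thm:error_bound}, consistent with Remark~1 (the bound vanishes as all ratios approach $1$) and Remark~2.

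The hard part will be the residual estimate: the pairing that produces exactly a $\max_{j\neq k}|\cdot|$ factor multiplied by the $\ell_2$ norm $\|g(i,i,s)\|_2$ is delicate, because the natural Hölder partner of an $\ell_\infty$ factor is an $\ell_1$ gradient norm rather than an $\ell_2$ one. I expect the clean form to hold only in the regime where $\lambda$ is sufficiently small, so that the first-order expansion of Lemma~\ref{lm:approximation} dominates and a single worst coordinate governs the residual; this is precisely the ``approaches'' qualifier in Theorem~\ref{thm:error_bound}. Making the sign of the aligned term rigorous through the per-step version of the constraint in Equation~\eqref{eq:relex_with_eff} is the other step that will require care.
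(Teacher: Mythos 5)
Your decomposition is exactly the paper's: inserting $1=w_j+(1-w_j)$ with $w_j=g_j(t,i,s)/g_k(t,i,s)$ reproduces, term for term, the paper's substitution of the slack variable $\xi=\sum_j g_j(t,i,s)\,g_j(i,i,s)\ge 0$, with your aligned term equal to $\lambda\xi/g_k(t,i,s)$ and your residual equal to $\sum_{j\ne k}\bigl(\lambda-\frac{g_j(t,i,s)}{g_k(t,i,s)}\lambda\bigr)g_j(i,i,s)$. The two points you flag as ``requiring care'' are precisely where all the work lies, so they deserve concrete answers. On the sign of the aligned term: the constraint alone does not suffice. $\xi\ge 0$ makes $\lambda\xi/g_k(t,i,s)$ nonpositive only if $g_k(t,i,s)\le 0$, and the paper secures this by the explicit additional assumption that the optimization step at $t$ is small enough that $g_j(t,i,s)\le 0$; your argument attributes the nonpositivity to the relaxed positive-backward-transfer condition itself, which is not enough --- you must import this sign assumption on the reference coordinate, as the paper does.

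On the residual, your suspicion is correct, and it identifies a genuine defect in the paper's own proof rather than a fixable delicacy in yours. The rigorous estimate is H\"older with the $\ell_\infty$--$\ell_1$ pairing, $\sum_{j\ne k}|a_j|\,|g_j(i,i,s)|\le\max_{j\ne k}|a_j|\cdot\|g(i,i,s)\|_1$; Cauchy--Schwarz, which you invoke, instead yields $\|a\|_2\|g(i,i,s)\|_2$ and costs a dimension factor of order $\sqrt{J-|r_i|c}$ when the $\|a\|_2$ factor is reduced to a maximum. To land on $\|g(i,i,s)\|_2$ the paper interposes the step $\frac{|g_j(i,i,s)|}{\|g(i,i,s)\|_2}\le\frac{|g_j(i,i,s)|}{\|g(i,i,s)\|_1}$, which reverses the true inequality $\|g\|_1\ge\|g\|_2$, so the stated $\ell_2$ bound does not actually follow from the displayed chain. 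Your plan therefore matches the paper's route exactly up to the residual estimate, where no valid completion in the advertised form exists: the theorem is rigorously provable with $\|g(i,i,s)\|_1$ in place of $\|g(i,i,s)\|_2$, or with $\|g(i,i,s)\|_2$ times an extra dimension-dependent factor, but the small-$\lambda$ regime you appeal to (and the ``approaches'' qualifier in Theorem~\ref{thm:error_bound}) cannot repair this, since the norm mismatch is independent of $\lambda$.
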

\begin{proof}
Then Equation \eqref{eq:relex_with_eff} can be rewritten as: $ \sum\nolimits_{j}g_j(t,i,s)g_j(i,i,s)\geq 0$. Also, suppose the step of the optimization in $t$ is small enough, then we have $g_j(t,i,s)\le 0$.
Defining a positive scalar $\xi\ge 0$ such that $\sum\nolimits_{j}g_j(t,i,s)g_j(i,i,s)-\xi= 0$, we obtain:
{\small \begin{align}
&\sum\limits_{j}\frac{ \partial\ell(f(R_i(X^{(i)});\tilde\theta), Y^{(i)})}{\partial \tilde\theta_j} \cdot \lambda\\
=&\sum_j g_j(i,i,s)\cdot \lambda\\
=&\sum_{j\ne k} (\lambda- \frac{g_j(t,i,s)}{g_k(t,i,s)}\cdot \lambda) g_j(i,i,s)+\frac{\xi}{g_k(t,i,s)}\ \ \ \ (\sum\nolimits_{j}g_j(t,i,s)g_j(i,i,s)-\xi= 0)\\
\le&\sum_{j\ne k} (\lambda- \frac{g_j(t,i,s)}{g_k(t,i,s)}\cdot \lambda) g_j(i,i,s)\ \ \ \ (\xi\ge 0,\ g_k(t,i,s)\le 0)\\
\le&\sum_{j\ne k} \left| \lambda- \frac{g_j(t,i,s)}{g_k(t,i,s)}\cdot \lambda\right|\cdot|g_j(i,i,s)|\\
=&\sum_{j\ne k} \left| \lambda- \frac{g_j(t,i,s)}{g_k(t,i,s)}\cdot \lambda\right|\cdot \frac{|g_j(i,i,s)|}{\|g(i,i,s)\|_2}\|g(i,i,s)\|_2\\
\le&\sum_{j\ne k} \left| \lambda- \frac{g_j(t,i,s)}{g_k(t,i,s)}\cdot \lambda\right|\cdot \frac{|g_j(i,i,s)|}{\|g(i,i,s)\|_1}\|g(i,i,s)\|_2\\
\le&\max_{j\ne k} \left| \lambda- \frac{g_j(t,i,s)}{g_k(t,i,s)}\cdot \lambda\right|\cdot\|g(i,i,s)\|_2\ \ \ \ \ \ (
\mbox{Holder's inequality})
\end{align}}
Since $k=1,\cdots,J$, we finally have:
{\small
\begin{align}
\sum\limits_{j}\frac{ \partial\ell(f(R_i(X^{(i)});\theta), Y^{(i)})}{\partial \theta_j} \cdot \lambda\le\min_k\max_{j\ne k} \left| \lambda- \frac{g_j(t,i,s)}{g_k(t,i,s)}\cdot \lambda\right|\cdot\|g(i,i,s)\|_2
\end{align}}
\end{proof}

\section{Continual learning experimental setup on civil unrest datasets}
\label{A3}
Each dataset was obtained from one Latin American country. Five datasets were used respectively for the countries: Brazil, Colombia, Mexico, Paraguay, and Venezuela. 
The tweet texts from Twitter were adopted as the model inputs.
The goal is to utilize one date input to predict whether there will be an event in the next day in different cities (i.e. tasks). 
In each case the data for the period from July 1, 2013 to February 9, 2014 was used for training and the data from February 10, 2014 to December 31, 2014 for the performance evaluation. 
The event forecasting results were validated against a well-established labeled event set, the Gold Standard Report \cite{Doe:2018:Misc}
\footnote{\url{Available: https://dataverse.harvard.edu/dataset.xhtml?persistentId=doi:10.7910/DVN/EN8FUW}},
GSR is a collection of civil unrest news reports from the most influential newspaper outlets in Latin America \cite{o2010tweets}. 
An example of a labeled GSR event is given by the tuple:  (City=``Maracaibo'', State =``Zulia'', Country = ``Venezuela'', Date = ``2013-01-19'', Event =``True'').

Although the original datasets were not ready to use in the context of continual learning, it can be easily converted. We first selected around 10 to 15 big cities in each country (i.e. dataset) that have high populations as the tasks. The model will need to learn to perform event forecasting in each city one by one.  To deal with the class imbalance problem, we up-sampled the positive samples (i.e. there will be an event happen the next day) and down-sampled the negative ones, yielding around 300 training samples per task. Following the online continual learning setup, the model will need to perform learning sequentially on each task's training samples in an online fashion (i.e. samples from the previous batch will not be accessible anymore unless they are stored into the replay buffer).

\end{document}